\title{An Impossibility Theorem for Node Embedding}
\date{}
\def\isArXiVversion{} 
\author{%
  T.~Mitchell~Roddenberry, Yu~Zhu, Santiago~Segarra \\
  Department of Electrical and Computer Engineering \\
  Rice University \\
  \href{mailto:mitch@rice.edu}{\texttt{mitch}},\href{mailto:yz126@rice.edu}{\texttt{yz126}},\href{mailto:segarra@rice.edu}{\texttt{segarra}}\texttt{@rice.edu}%
}
\newenvironment{subproof}[1][\proofname]{%
  \begin{proof}[#1]%
}{%
  \end{proof}%
}
\newtheorem*{rep@theorem}{\rep@title}
\newcommand{\newreptheorem}[2]{%
\newenvironment{rep#1}[1]{%
 \def\rep@title{#2 \@refstar{##1}}%
 \begin{rep@theorem}}%
 {\end{rep@theorem}}}
\newtheorem{prop}{Proposition}
\newtheorem{theorem}{Theorem}
\newtheorem{lemma}{Lemma}
\newtheoremstyle{myremark}
{\topsep} 
{\topsep} 
{\normalfont} 
{} 
{\bfseries} 
{.} 
{5pt plus 1pt minus 1pt} 
{\thmname{#1}\thmnumber{ #2}\thmnote{ (#3)}} 
\theoremstyle{myremark}
\newtheorem{remark}{Remark}
\theoremstyle{definition}
\newtheorem{defn}{Definition}
\newtheorem{property}{Property}
\newcommand{\parheading}[1]{{\noindent{\textit{#1}}}}
\DeclareMathOperator*{\argmin}{arg\,min}
\newcommand{\ob}{\ensuremath{\mathrm{ob}}}
\newcommand{\mor}{\ensuremath{\mathrm{Mor}}}
\newcommand{\id}{\ensuremath{\mathrm{id}}}
\newcommand{\bu}{\ensuremath{\mathbf{u}}}
\newcommand{\bA}{\ensuremath{\mathbf{A}}}
\newcommand{\cC}{\ensuremath{\mathcal{C}}}
\newcommand{\cD}{\ensuremath{\mathcal{D}}}
\newcommand{\cM}{\ensuremath{\mathcal{M}}}
\newcommand{\cN}{\ensuremath{\mathcal{N}}}
\newcommand{\cP}{\ensuremath{\mathcal{P}}}
\newcommand{\cS}{\ensuremath{\mathcal{S}}}
\newcommand{\cV}{\ensuremath{\mathcal{V}}}
\newcommand{\cW}{\ensuremath{\mathcal{W}}}
\newcommand{\sC}{\ensuremath{\mathsf{C}}}
\newcommand{\sD}{\ensuremath{\mathsf{D}}}
\newcommand{\sSet}{\ensuremath{\mathsf{Set}}}
\newcommand{\sM}{\ensuremath{\mathsf{M}}}
\newcommand{\sN}{\ensuremath{\mathsf{N}\mathcal{S}}}
\newcommand{\bbR}{\ensuremath{\mathbb{R}}}
\newcommand{\bbRp}{\ensuremath{\mathbb{R}^+}}
\newcommand{\bbZ}{\ensuremath{\mathbb{Z}}}
\begin{document}

\maketitle

\begin{abstract}
  With the increasing popularity of graph-based methods for dimensionality reduction and representation learning, node embedding functions have become important objects of study in the literature.
  In this paper, we take an axiomatic approach to understanding node embedding methods, first stating three properties for embedding dissimilarity networks, then proving that all three cannot be satisfied simultaneously by \emph{any} node embedding method.
  Similar to existing results on the impossibility of clustering under certain axiomatic assumptions, this points to fundamental difficulties inherent to node embedding tasks.
  Once these difficulties are identified, we then relax these axioms to allow for certain node embedding methods to be admissible in our framework.
\end{abstract}

\section{Introduction}
\label{sec:intro}

Graph-structured data is pervasive in the natural and social sciences, not only as a direct model for data, but as a useful abstraction for understanding complex relational systems.
This ubiquity has driven recent developments in graph representation learning~\cite{Hamilton:2020}, seeking to extract useful representations of graphs for classification and inference tasks.
In particular, node embedding methods embed the nodes of a graph in a low-dimensional space, in ways that encode the structural role of each node in a graph.
Indeed, the goal of node embedding methods is to represent the nodes of a graph in a space where ``similar'' nodes are close together, and ``different'' nodes are far apart~\cite{Hamilton:2020}.
Of course, the notion of two nodes being similar is subject to the practitioner's interpretation of the data, and the space in which the nodes are represented imposes geometric structure on the notion of proximity.

Inspired by the seminal paper of \citeauthor{Kleinberg:2002}~\cite{Kleinberg:2002}, we take an \emph{axiomatic view} of node embedding methods.
Specifically, we state three basic axioms that node embedding methods ought to satisfy, and then prove that no node embedding method can satisfy all three.
That is to say, a practitioner applying a node embedding method must sacrifice \emph{at least one} of these properties.
Following this, we present relaxations of these axioms that are not contradictory, allowing for the construction of node embedding algorithms that satisfy these relaxed properties simultaneously.

\section{Preliminaries}
\label{sec:prelim}

A typical type of data from which graphs, or networks, are constructed is a finite point cloud in a metric space.
Finding the triangle inequality unnecessary, we relax the metric condition to yield a \emph{dissimilarity network}.
To construct a dissimilarity network, begin with a finite set of nodes $\cV$.
For convenience, we will typically identify $\cV$ with the first $|\cV|$ natural numbers, saying $\cV=\{1,2,\ldots,|\cV|\}$.
A dissimilarity network, then, is a finite set of nodes $\cV$ coupled with a symmetric \emph{dissimilarity function} $d:\cV\times\cV\to\bbRp$, with the condition that $d(i,j)=0$ if and only if $i=j$.
Notice that metrics are dissimilarity functions, so that a finite metric space can be viewed as a dissimilarity network.

More generally, we speak of a \emph{dissimilarity space}, which is a set $\cS$ coupled with a dissimilarity function $\rho$ on $\cS$ (so that $\rho:\cS\times\cS\to\bbRp$).
If we relax the identity of indiscernibles, so that $x=y$ implies $\rho(x,y)=0$, but $\rho(x,y)=0$ does not imply $x=y$,
as well as allow for $\rho$ to take values in $\bbRp\cup\{+\infty\}$, we call $\rho$ an \emph{pseudodissimilarity function}.
A \emph{pseudodissimilarity space} is then a set coupled with a pseudodissimilarity function on it.
Let $\cM$ be the set of all finite pseudodissimilarity spaces.
This allows us to endow a dissimilarity network with features on the nodes.
Let $(\cS,\rho)$ be a dissimilarity space.
An \emph{$\cS$-featured dissimilarity network} is a dissimilarity network $(\cV,d)$ and a \emph{feature map} $F:\cV\to\cS$, denoted $(\cV,d,F)$.
The set of all such objects is denoted $\cN(\cS)$.
We will speak of \emph{featured dissimilarity networks} when $\cS$ is understood from context.

Graph representation learning seeks to represent networks in tractable spaces, typically Euclidean space or on a low-dimensional manifold~\cite{Hamilton:2020}.
This motivates our object of study, which is the class of \emph{node embedding functions}.
A node embedding function is a map $\xi:\cN(\cS)\to\cM$ with the condition that for any featured dissimilarity network $N=(\cV,d,F)$, the corresponding dissimilarity space has the same underlying set: $\xi(N)=(\cV,\phi)$, for some dissimilarity function $\phi$ on $\cV$.
In the context of embedding nodes into some predefined space, such as $\bbR^n$, this is equivalent to restricting the dissimilarity function on that space to the embedded points of $\cV$.
For instance, if the embedding function maps the nodes into $\bbR^n$ with the usual metric, then $\phi$ is determined by the Euclidean distances between the embedded nodes.

\section{Main Result}
\label{sec:main}

A typical goal in node embedding tasks is to embed nodes in a way that \emph{preserves} and \emph{reflects} the original network structure.
We begin by defining three properties of node embedding functions that capture these goals, namely being \emph{self-contained}, \emph{consistent}, and \emph{graph-aware}.
Throughout, let $(\cS,\rho)$ be a dissimilarity space, and consider node embedding functions $\xi:\cN(\cS)\to\cM$.

In many graph representation learning tasks, we seek embeddings that are invariant/equivariant to the labeling of nodes.
In particular, the embedding of nodes via some node embedding function should only be dependent on their pairwise dissimilarities and features.
We state this for dissimilarity networks with $2$ nodes as follows.

\begin{property}[Self-containedness]\label{prpt:contained}
  Let $N_2=(\{i,j\},d,F)$ be a featured dissimilarity network on $2$ nodes, and put $\alpha=d(i,j)$, so that $\alpha>0$.
  For some node embedding function $\xi$, put $(\{i,j\},\phi)=\xi(N_2)$.
  Suppose there exists a function $g$ such that for all such networks,
  \begin{equation}
    \phi(i,j) = g(F(i),F(j);\alpha).
  \end{equation}
  Under these conditions, we say that $\xi$ is \emph{self-contained}.
\end{property}

Self-containedness is perhaps the most self-evident property we propose.
In particular, it requires node embedding functions to depend on the network structure alone, rather than other external information.
Stated in the most basic case for networks on two nodes, it is a primitive form of permutation invariance.
That is, when embedding a two-node network, the dissimilarity in the embedded space should only depend on the dissimilarity in the original network and the features of both nodes.
Another useful property of a node embedding function is for it to preserve proximity between nodes: that is, if two nodes $i,j$ are similar, their embeddings should be similar as well.
So, if we shrink the dissimilarity between nodes, their dissimilarity in the embedding space should decrease accordingly.
We state this by first defining the notion of a ``contractive and dissimilarity non-increasing'' map from one network to another.

\begin{defn}\label{defn:codni}
  Consider two featured dissimilarity networks $N_1=(\cV_1,d_1,F_1), N_2=(\cV_2,d_2,F_2)$ with possibly different numbers of nodes.
  A map $\Psi:\cV_1\to\cV_2$ is said to be \emph{contractive and dissimilarity non-increasing} (CoDNI) if, for all $i,j\in\cV_1$,
  \begin{gather}
    \rho(F_1(i),F_1(j))\geq\rho(F_2(\Psi(i)),F_2(\Psi(j))) \\
    d_1(i,j)\geq d_2(\Psi(i),\Psi(j)).
  \end{gather}
\end{defn}

Notice that the image nodes under a CoDNI map are closer in feature space and have smaller network dissimilarities than their corresponding preimages.
Thus, given a CoDNI map from one network to another, a reasonable property of a node embedding is to reduce the distance between nodes in the embedding space.
Alternatively, this property requires node embeddings to preserve \emph{local} dissimilarity information.

\begin{property}[Consistency]\label{prpt:consistent}
  Consider two featured dissimilarity networks $N_1=(\cV_1,d_1,F_1), N_2=(\cV_2,d_2,F_2)$ with possibly different numbers of nodes.
  For some node embedding function $\xi$, put $(\cV_1,\phi_1)=\xi(N_1)$ and $(\cV_2,\phi_2)=\xi(N_2)$.
  We say that $\xi$ is \emph{consistent} if, for all $i,j\in\cV_1$ and CoDNI maps $\Psi:\cV_1\to\cV_2$,
  \begin{equation}
    \phi_1(i,j)\geq\phi_2(\Psi(i),\Psi(j)).
  \end{equation}
\end{property}

Finally, we wish for the node embedding to reflect the \emph{global} dissimilarity structure of the underlying network as well.
We express this in terms of the dissimilarity between the embedding of a given pair of nodes, where we increase all other pairwise dissimilarities in the network in order to change the embedded dissimilarity between the two nodes.

\begin{property}[Graph-awareness]\label{prpt:aware}
  Consider a featured dissimilarity network $N=(\cV,d,F)$ on at least $3$ nodes, and consider an arbitrary pair of nodes $i,j\in\cV$ such that $i\neq j$.
  Let $\cD_{ij}$ be the set of dissimilarity functions $d'$ such that $d'(i,j)=d(i,j)$, and for all $k,\ell\in\cV$ it holds that $d'(k,\ell)\geq d(k,\ell)$.
  For some node embedding function $\xi$, if for all such networks $N$ there exists some $d'\in\cD_{ij}$ such that the embeddings $(\cV,\phi)=\xi(N), (\cV,\phi')=\xi((\cV,d',F))$ satisfy $\phi(i,j)\neq\phi'(i,j)$, then we say that $\xi$ is \emph{graph-aware}.
\end{property}

With these definitions in place, we now state our impossibility result.

\begin{theorem}\label{thm:impossible}
  There does not exist a node embedding function that is simultaneously \hyperref[prpt:contained]{self-contained}, \hyperref[prpt:consistent]{consistent}, and \hyperref[prpt:aware]{graph-aware}.
\end{theorem}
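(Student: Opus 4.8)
The plan is to show that \hyperref[prpt:contained]{self-containedness} together with \hyperref[prpt:consistent]{consistency} pin down the embedded dissimilarity of a well-chosen pair of nodes so rigidly that \hyperref[prpt:aware]{graph-awareness} cannot also hold. To avoid bookkeeping with the feature map, I would work entirely within the class of networks whose feature map is constant: fix some $x\in\cS$ and let $F_x$ denote the map sending every node to $x$. For two such networks the feature inequality in \cref{defn:codni} reads $\rho(x,x)\ge\rho(x,x)$ and holds automatically, so a map between constant-feature networks is CoDNI precisely when it is dissimilarity non-increasing in the $d$ part alone. By \hyperref[prpt:contained]{self-containedness} there is a function $g$ with $\phi(i,j)=g(F(i),F(j);\alpha)$ for every two-node network; hence every two-node constant-feature network with node dissimilarity $\alpha$ is assigned the single value $g(x,x;\alpha)$, which I abbreviate $\gamma(\alpha)$. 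Conceptually, the proof shows that for a symmetric enough network this rigidity propagates to three or more nodes.

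Next I would exhibit a concrete three-node network on which graph-awareness must fail. Fix $\alpha>0$, let $N=(\{1,2,3\},d,F_x)$ be the ``equilateral'' network with $d(1,2)=d(1,3)=d(2,3)=\alpha$, and consider the pair $\{1,2\}$; write $(\{1,2,3\},\phi)=\xi(N)$. Let $N_2=(\{1,2\},d_2,F_x)$ with $d_2(1,2)=\alpha$, so $\xi(N_2)$ assigns dissimilarity $\gamma(\alpha)$ to $\{1,2\}$. I claim $\phi(1,2)=\gamma(\alpha)$. For the upper bound, the map $N_2\to N$ fixing $1$ and $2$ is CoDNI since it preserves $d(1,2)=\alpha$, so \hyperref[prpt:consistent]{consistency} gives $\gamma(\alpha)\ge\phi(1,2)$. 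For the lower bound, the ``collapsing'' map $\Psi\colon N\to N_2$ with $\Psi(1)=1$ and $\Psi(2)=\Psi(3)=2$ is CoDNI, because $d(1,2)=\alpha\ge d_2(1,2)=\alpha$, $d(1,3)=\alpha\ge d_2(1,2)=\alpha$, and $d(2,3)=\alpha\ge d_2(2,2)=0$; \hyperref[prpt:consistent]{consistency} then gives $\phi(1,2)\ge\gamma(\alpha)$.

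Finally I would rerun exactly these two comparisons against an arbitrary $d'\in\cD_{12}$, which by definition satisfies $d'(1,2)=\alpha$ and $d'(1,3),d'(2,3)\ge\alpha$; write $(\{1,2,3\},\phi')=\xi((\{1,2,3\},d',F_x))$. The inclusion $N_2\to(\{1,2,3\},d',F_x)$ fixing $1,2$ is still CoDNI because $d_2(1,2)=\alpha\ge d'(1,2)=\alpha$, yielding $\gamma(\alpha)\ge\phi'(1,2)$; and the collapsing map $\Psi$ is still CoDNI because $d'(1,2)\ge\alpha$, $d'(1,3)\ge\alpha=d_2(1,2)$, and $d'(2,3)\ge 0$, yielding $\phi'(1,2)\ge\gamma(\alpha)$. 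Hence $\phi'(1,2)=\gamma(\alpha)=\phi(1,2)$ for every $d'\in\cD_{12}$, which directly contradicts \hyperref[prpt:aware]{graph-awareness} applied to $N$ and the pair $\{1,2\}$; so no node embedding function can be simultaneously self-contained, consistent, and graph-aware.

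The step I expect to be delicate is the construction of the collapsing CoDNI map $\Psi\colon N\to N_2$, because \hyperref[prpt:consistent]{consistency} has to be invoked in \emph{both} directions: the pinned pair $\{1,2\}$ must have small enough dissimilarity to admit the inclusion $N_2\to N$, yet every pair merged by the collapse (here $\{1,3\}$, inheriting the constraint $d(1,3)\ge d(1,2)$) must stay large enough to admit $\Psi$. Choosing $N$ equilateral and the feature map constant is exactly what threads this needle and keeps every required inequality an equality or a trivial $\ge 0$; a less symmetric base network or non-constant features would force extra hypotheses on $d(1,3)$ and on the feature dissimilarities $\rho(x,F(3))$, and the same argument would still go through once those are arranged.
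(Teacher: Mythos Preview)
Your proof is correct and follows essentially the same route as the paper: an equilateral network, the two-node restriction $N_2$, and the inclusion/collapse CoDNI maps applied via consistency in both directions to pin $\phi(1,2)$ and $\phi'(1,2)$ to the self-contained value $g(\cdot,\cdot;\alpha)$. The only difference is cosmetic: the paper works with a general feature map and selects the pair $i,j$ minimizing $\rho(F(i'),F(j'))$ so that the collapsing map satisfies the feature inequality in \cref{defn:codni}, whereas you sidestep this by taking the feature map constant---a clean simplification that loses nothing, since exhibiting a single network on which graph-awareness fails suffices.
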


We prove this result in \cref{sec:proof}.
\Cref{thm:impossible} reflects a fundamental difficulty inherent to the node embedding task.
One wishes for a node embedding method to preserve \emph{local} structures, while still being sensitive to \emph{global} structure as well.
However, preserving both of these in a way consistent with our axioms is impossible: intuitively, one cannot suitably capture both types of information within a single embedding.

\begin{remark}
  Following the categorical view of clustering algorithms taken by~\cite{Carlsson:2013}, 
  studying node embedding functions as functors between categories naturally yields self-containedness and consistency as desirable properties, with graph-awareness defined as before in order to demand sensitivity to global network structure.
  Let $\sN$ be the category of $\cS$-featured dissimilarity networks with CoDNI maps as morphisms, 
  $\sM$ be the category of pseudodissimilarity spaces as objects with non-expansive maps as morphisms, 
  and $\sSet$ be the category of sets with functions between them as morphisms.
  We rephrase \cref{thm:impossible} in this framework, noting that the properties of self-containedness and consistency are implied by functoriality:
  \begin{reptheorem}{thm:impossible}[Categorical Version]
    There exists no \hyperref[prpt:aware]{graph-aware} functor $\xi:\sN\to\sM$.
  \end{reptheorem}
  We discuss the definitions of these objects and the relationship between both statements of \cref{thm:impossible} in \cref{app:categorical}.
\end{remark}

\section{Examples}
\label{sec:example}

In light of \cref{thm:impossible}, any node embedding method that satisfies two of the three properties of self-containedness, consistency, and graph-awareness must fail to satisfy the third.
We illustrate this with a few examples of node embedding functions previously considered in the literature~\cite{Kleinberg:2002,Carlsson:2013}, which we examine through the lens of our proposed axioms.

\subsection{Single-Linkage Clustering}
\label{sec:example:single-linkage}


We consider the single-linkage clustering procedure, which embeds dissimilarity networks in ultrametric spaces~\cite{Johnson:1967}.
Indeed, dendrograms represent ultrametric spaces, where the distance between two points is the point at which they are merged into the same cluster.

We first discuss a useful construction for defining the embedding distance of the single-linkage procedure.
\begin{defn}
  For a dissimilarity network $N=(\cV,d,F)$, and two nodes $i,j\in\cV$, a \emph{path from $i$ to $j$} is a finite sequence $P=[x_1,x_2,\ldots,x_{L-1},x_L]$ of nodes in $\cV$ such that $x_1=i, x_L=j$.
  The set of all paths from $i$ to $j$ is denoted $\cP_{ij}$.
  We refer to the $\ell\mathrm{th}$ element of a path $P$ by $P(\ell)$.
  The \emph{length} of a path is the number of elements in the path, denoted $L(P)$.
\end{defn}
For a dissimilarity network $N=(\cV,d,F)$, the single-linkage procedure $(\cV,\phi)=\xi(N)$ is such that the embedding distance between two nodes $i,j\in\cV$ is given by
\begin{equation}\label{eq:single-linkage-distance}
  \phi(i,j) = \min_{P\in\cP_{ij}}\max_{1\leq\ell <L(P)}d(P(\ell),P(\ell+1)).
\end{equation}
Notice that this embedding disregards node features.
Let us examine this embedding distance with respect to each of our proposed properties.

\begin{prop}\label{prop:single-linkage-fails}
  The single-linkage procedure is \hyperref[prpt:contained]{self-contained} and  \hyperref[prpt:consistent]{consistent}, but not \hyperref[prpt:aware]{graph-aware}.
\end{prop}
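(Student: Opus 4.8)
The plan is to verify each of the three properties in turn for the single-linkage embedding distance $\phi(i,j) = \min_{P\in\cP_{ij}}\max_{1\le\ell<L(P)} d(P(\ell),P(\ell+1))$, showing that self-containedness and consistency hold while graph-awareness fails.

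\textbf{Self-containedness.} For a two-node network $N_2 = (\{i,j\},d,F)$, the only paths from $i$ to $j$ in $\cP_{ij}$ are $[i,j]$ and paths of the form $[i,j,i,j,\ldots]$; any such path has a consecutive pair equal to $(i,j)$ or $(j,i)$, so the inner maximum is always $d(i,j)=\alpha$, and hence the outer minimum is $\alpha$. Thus $\phi(i,j)=\alpha$, which is trivially of the form $g(F(i),F(j);\alpha)$ with $g(a,b;\alpha)=\alpha$. (There is the degenerate subtlety that a path must have length at least $2$ since $x_1=i\ne j=x_L$; I would note this briefly.) So the single-linkage procedure is self-contained.

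\textbf{Consistency.} Let $\Psi:\cV_1\to\cV_2$ be a CoDNI map, so $d_1(k,\ell)\ge d_2(\Psi(k),\Psi(\ell))$ for all $k,\ell\in\cV_1$. Given $i,j\in\cV_1$, take any path $P=[x_1,\ldots,x_L]\in\cP_{ij}$ achieving (or approaching) the minimum for $\phi_1(i,j)$. Its image $\Psi(P)=[\Psi(x_1),\ldots,\Psi(x_L)]$ is a path from $\Psi(i)$ to $\Psi(j)$ in $\cV_2$, and for each consecutive pair $d_2(\Psi(x_\ell),\Psi(x_{\ell+1}))\le d_1(x_\ell,x_{\ell+1})$, so $\max_\ell d_2(\Psi(x_\ell),\Psi(x_{\ell+1}))\le\max_\ell d_1(x_\ell,x_{\ell+1})$. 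Taking the minimum over $P\in\cP_{ij}$ on the left-dominating side and using that $\Psi(P)\in\cP_{\Psi(i)\Psi(j)}$ gives $\phi_2(\Psi(i),\Psi(j))\le\phi_1(i,j)$. Hence single-linkage is consistent. The feature inequality in the CoDNI definition is not even needed here, since the embedding ignores features.

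\textbf{Failure of graph-awareness.} This is the crux, and the obstacle is to show that \emph{no} matter how one inflates the other dissimilarities (holding $d(i,j)$ fixed), the embedded distance $\phi(i,j)$ cannot change. The key observation is that $d'\in\cD_{ij}$ means $d'\ge d$ pointwise and $d'(i,j)=d(i,j)$. Since the direct path $[i,j]$ is always available, $\phi'(i,j)\le d'(i,j)=d(i,j)$. Conversely, monotonicity of $\phi$ in the dissimilarity function (every inner max only grows when $d$ grows to $d'$) gives $\phi'(i,j)\ge\phi(i,j)$. But also $\phi(i,j)\le d(i,j)$ trivially via the direct path, and in fact for any path $P$ the quantity $\max_\ell d(P(\ell),P(\ell+1))$ is at least $d(i,j)$ is \emph{not} generally true — so instead I argue directly: $\phi'(i,j)\le d'(i,j)=d(i,j)$ and $\phi(i,j)\le d(i,j)$, while $\phi'(i,j)\ge\phi(i,j)$ by monotonicity. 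To pin both down, note that since $d'(i,j)=d(i,j)$ is an \emph{upper} bound on $\phi'(i,j)$ that is unaffected, and the minimum defining $\phi(i,j)$ is already $\le d(i,j)$, inflating other edges can only push path-maxima up, never creating a cheaper route; formally $\phi(i,j)\le\phi'(i,j)\le d(i,j)$, and I claim $\phi(i,j)=\phi'(i,j)$ because any path $P$ realizing $\phi'(i,j)<d(i,j)$ would have all its edges satisfying $d'(P(\ell),P(\ell+1))<d(i,j)$, hence $d(P(\ell),P(\ell+1))<d(i,j)$, so the same path certifies $\phi(i,j)\le\phi'(i,j)$; combined with monotonicity the values are equal. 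Therefore for every $N$ on at least $3$ nodes and every pair $i\ne j$, $\phi(i,j)=\phi'(i,j)$ for \emph{all} $d'\in\cD_{ij}$, so there is no $d'$ with $\phi(i,j)\ne\phi'(i,j)$, and graph-awareness fails.

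\textbf{Main obstacle.} The delicate point is the graph-awareness argument: one must argue that inflating \emph{all} other edges simultaneously still cannot perturb $\phi(i,j)$, which rests on the min-max structure together with the fact that the single direct edge $(i,j)$ is frozen and already caps the embedded distance from above while monotonicity caps it from below. I would state this as a short lemma: for single-linkage, $\phi(i,j)$ depends on $d$ only through the sublevel-set connectivity at threshold $d(i,j)$, which is nondecreasing under $d\mapsto d'$ but also cannot exceed what the frozen direct edge permits — hence it is constant on $\cD_{ij}$.
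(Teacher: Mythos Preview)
Your arguments for self-containedness and consistency are correct and match the paper's. The gap is in the graph-awareness part.

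You claim that for \emph{every} network $N$ and \emph{every} pair $i\neq j$, the single-linkage distance $\phi(i,j)$ is constant across all of $\cD_{ij}$. This is false. Take $\cV=\{1,2,3\}$ with $d(1,2)=10$ and $d(1,3)=d(2,3)=1$, so that $\phi(1,2)=1$ via the path $[1,3,2]$. Now set $d'(1,2)=10$ and $d'(1,3)=d'(2,3)=5$; then $d'\in\cD_{12}$ but $\phi'(1,2)=5\neq 1$. Your argument breaks exactly where you write ``the same path certifies $\phi(i,j)\le\phi'(i,j)$; combined with monotonicity the values are equal'': both inequalities you have produced point in the \emph{same} direction ($\phi\le\phi'$), so no equality follows. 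The frozen direct edge $(i,j)$ only furnishes an upper bound $d(i,j)$ on both $\phi(i,j)$ and $\phi'(i,j)$, never a lower bound, and hence cannot pin $\phi'(i,j)$ down to $\phi(i,j)$.

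The paper avoids this by observing that to negate graph-awareness one needs only a \emph{single} pair $i,j$ (in some network) for which $\phi(i,j)$ is constant over $\cD_{ij}$. It takes $i,j$ to be a pair of \emph{minimum} dissimilarity in $N$. For that pair the direct path $[i,j]$ already achieves the minimum in \eqref{eq:single-linkage-distance}, since every other path contains an edge of dissimilarity at least $d(i,j)$; and any $d'\in\cD_{ij}$ only increases the other edges while keeping $d'(i,j)=d(i,j)$, so the direct path remains optimal with the same value. Your proof becomes correct once you make this specialization; the universal claim you attempted does not hold.
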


\begin{proof}
Although in light of \Cref{thm:impossible} it is sufficient to verify that self-containedness and consistency hold to prove \cref{prop:single-linkage-fails}, we discuss all three properties for the sake of completeness.

\parheading{Self-containedness.}
Let $N_2=(\{i,j\},d,F)$ be a featured dissimilarity network on $2$ nodes, and let $(\{i,j\},\phi)=\xi(N_2)$.
Observe that $P=[i,j]$ is the only path from $i$ to $j$ (disregarding paths with repeated entries), so that by \eqref{eq:single-linkage-distance} we have
\begin{equation}
  \phi(i,j) = d(i,j).
\end{equation}
This is consistent with \cref{prpt:contained}, where $g(F(i),F(j);\alpha)=\alpha$, as desired.

\parheading{Consistency.}
Let two featured dissimilarity networks $N_1=(\cV_1,d_1,F_1), N_2=(\cV_2,d_2,F_2)$ be given such that there exists a CoDNI map $\Psi:\cV_1\to\cV_2$.
Let $(\cV_1,\phi_1)=\xi(N_1), (\cV_2,\phi_2)=\xi(N_2)$.
Let nodes $i,j\in\cV_1$ be given, and consider a path $P$ from $i$ to $j$ (in $\cV_1$).
Construct the path $\widehat{P}$ from $\Psi(i)$ to $\Psi(j)$ by applying $\Psi$ to each element in $P$.
Since $\Psi$ is CoDNI, we have that $d_1(P(\ell),P(\ell+1))\geq d_2(\widehat{P}(\ell), \widehat{P}(\ell+1))$ for all $1\leq\ell < L(P)=L(\widehat{P})$.
It follows that
\begin{equation}
  \max_{1\leq\ell < L(P)}
  d_1(P(\ell),P(\ell+1))
  \geq
  \max_{1\leq\ell < L(\widehat{P})}
  d_2(\widehat{P}(\ell),\widehat{P}(\ell+1)).
\end{equation}
Examining \eqref{eq:single-linkage-distance}, this implies that
\begin{equation}
  \phi_1(i,j)\geq\phi_2(\Psi(i),\Psi(j)),
\end{equation}
which is consistent with \cref{prpt:consistent}, as desired.

\parheading{Graph-awareness.}
Since the other properties hold, \cref{thm:impossible} implies that the single-linkage procedure is not graph-aware.
To verify this, let a featured dissimilarity network $N=(\cV,d,F)$ on at least $3$ nodes be given, and take $i,j\in\cV$ to be the pair of nodes with minimum dissimilarity.
That is,
\begin{equation}
  i,j\in\argmin_{i',j'\in\cV:i'\neq j'}d(i',j').
\end{equation}
One can check that the path $P=[i,j]$ always attains the minimum distance in \eqref{eq:single-linkage-distance}.
Moreover, for any $d'\in\cD_{ij}$ as described in \cref{prpt:aware}, the expression \eqref{eq:single-linkage-distance} increases with respect to the distance $d'$.
Since $d(i,j)=d'(i,j)$ by definition, the path $P=[i,j]$ still attains the minimum distance in \eqref{eq:single-linkage-distance} for any $d'\in\cD_{ij}$, and said distance is the same as that induced by $d$.
Thus, the single-linkage procedure is not graph-aware.
\end{proof}

\begin{remark}
  The single-linkage procedure for embedding nodes in an ultrametric space is a special case of a metric projection of a dissimilarity network, as described in \cite{Segarra:Metric,Segarra:2020}.
  Similar arguments show that such metric projections satisfy self-containedness and consistency, but fail to satisfy graph-awareness.
\end{remark}

\subsection{Motif-Based Embedding}
\label{sec:example:motif}

Single-linkage clustering relies strictly on pairwise distances between nodes, leading to undesirable properties such as sensitivity to noise, chaining~\cite{Lance:1967}, and failure to incorporate global structure, as reflected by the fact that the single-linkage procedure is not graph-aware.
To remedy this, we consider another ultrametric embedding based on triangular motifs, which we refer to as the \emph{triangle-linkage procedure}, based on the clustering scheme discussed in~\cite[Section~6.7]{Carlsson:2013}.

For a dissimilarity network $N=(\cV,d,F)$, the triangle-linkage procedure with respect to $L$, denoted $(\cV,\phi)=\xi_T(N)$, is such that the embedding dissimilarity between any distinct nodes $i,j\in\cV$ is $+\infty$ when $|\cV|<3$.
Otherwise, we have
\begin{equation}
  \phi(i,j) = \min\{\epsilon>0:\exists k\in\cV, i\neq k, j\neq k, \max\{d(i,j),d(i,k),d(j,k)\}\leq\epsilon\}.
\end{equation}
That is, two nodes $i,j$ have an embedded dissimilarity of at most $\epsilon$ if there is a third node $k$ such that the triple $i,j,k$ forms a triangle where each side has dissimilarity less than or equal to $\epsilon$.
One could easily modify this definition for other motifs, such as small cliques, loops, and others.
As before, we now examine this embedding distance with respect to our proposed properties.

\begin{prop}\label{prop:motif-linkage-fails}
  The triangle-linkage procedure is \hyperref[prpt:contained]{self-contained} and  \hyperref[prpt:aware]{graph-aware}, but not \hyperref[prpt:consistent]{consistent}.
\end{prop}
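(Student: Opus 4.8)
The plan is to verify the three properties in turn, checking self-containedness and graph-awareness directly and establishing failure of consistency by an explicit counterexample.

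\parheading{Self-containedness.}
First I would note that any featured dissimilarity network on two nodes has $|\cV| = 2 < 3$, so by the definition of $\xi_T$ the embedding dissimilarity is $\phi(i,j) = +\infty$, independently of the features $F(i), F(j)$ and of $\alpha = d(i,j)$. Hence \cref{prpt:contained} holds with the constant function $g(F(i),F(j);\alpha) = +\infty$.

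\parheading{Graph-awareness.}
Given a network $N = (\cV, d, F)$ on at least three nodes and a pair $i \neq j$, I would pick $C > \max_{k,\ell\in\cV} d(k,\ell)$ and define $d'$ by $d'(i,k) = d'(k,i) = d(i,k) + C$ for every $k \in \cV\setminus\{i,j\}$, leaving every other dissimilarity (in particular $d(i,j)$) unchanged. Then $d'$ is symmetric, vanishes only on the diagonal, satisfies $d'(i,j) = d(i,j)$ and $d' \geq d$ pointwise, so $d' \in \cD_{ij}$. Every triple that could witness $\phi'(i,j)$ contains some $k \notin \{i,j\}$, hence an edge $\{i,k\}$ of dissimilarity $d(i,k) + C > C$, so $\phi'(i,j) > C > \max_{k,\ell} d(k,\ell) \geq \phi(i,j)$. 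Thus $\phi(i,j) \neq \phi'(i,j)$ for this $d'$, which is exactly graph-awareness.

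\parheading{Failure of consistency.}
Here I would build an explicit pair of networks. Take $N_1 = (\{1,2,3\}, d_1, F_1)$ with $d_1 \equiv 1$ on distinct nodes and $F_1$ constant, and $N_2 = (\{1',2'\}, d_2, F_2)$ with $d_2(1',2') = 1$ and $F_2$ constant. The map $\Psi:\cV_1 \to \cV_2$ given by $\Psi(1) = 1'$, $\Psi(2) = 2'$, $\Psi(3) = 1'$ is CoDNI: the feature inequality is trivial since both feature maps are constant, and the dissimilarity inequality holds on the three pairs of $\cV_1$ because $d_1(1,3) = 1 \geq 0 = d_2(1',1')$ and $d_1(1,2) = d_1(2,3) = 1 \geq 1 = d_2(1',2')$. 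Yet $\phi_1(1,2) = 1$ (the triple $1,2,3$ has all sides equal to $1$), while $\phi_2(\Psi(1),\Psi(2)) = \phi_2(1',2') = +\infty$ since $|\cV_2| < 3$. Hence $\phi_1(1,2) < \phi_2(\Psi(1),\Psi(2))$, contradicting \cref{prpt:consistent}. A variant that does not rely on the $|\cV|<3$ clause keeps $|\cV_2| = 3$ with $d_2(1',2') = 1$ and $d_2(1',3') = d_2(2',3') = 2$ and the same collapsing $\Psi$, giving $\phi_2(1',2') = 2 > 1 = \phi_1(1,2)$.

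\parheading{Anticipated obstacle.}
The only delicate step is producing the consistency counterexample, since it must reconcile two competing requirements: the CoDNI conditions force the dissimilarities of $N_2$ to be dominated by the pulled-back dissimilarities of $N_1$, while we simultaneously need the triangle-linkage dissimilarity to strictly \emph{increase} under $\Psi$. Letting $\Psi$ collapse a node is the mechanism that makes both possible at once --- it allows $N_2$ either to be too small to contain a triangle through the image pair, or to retain only a triangle with an inflated third vertex --- and once the maps are set up correctly everything else is routine verification.
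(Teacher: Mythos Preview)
Your proof is correct and follows essentially the same approach as the paper. The self-containedness argument is identical; for graph-awareness the paper inflates only the edges from $i,j$ to the ``witness'' nodes (those $k$ with $d(i,k),d(j,k)\le\phi(i,j)$) to $2\phi(i,j)$ rather than adding a large $C$ to every $d(i,k)$, and for consistency the paper uses exactly your second variant (two $3$-node networks with a collapsing map), so the differences are cosmetic.
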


\begin{proof}
As before, we verify this statement for all three properties.

\parheading{Self-containedness.}
Since the node set of a $2$-node network $N=(\{i,j\},d,F)$ has cardinality fewer than $3$ nodes, the embedding distance between its constituent nodes under the triangle-linkage procedure is always $+\infty$: this satisfies self-containedness where $g(F(i),F(j);\alpha)=+\infty$.

\parheading{Graph-awareness.}
Let a featured dissimilarity network $N=(\cV,d,F)$ be given such that $|\cV|\geq 3$.
Let $(\cV,\phi)=\xi_T(N)$ be the triangle-linkage embedding of $N$, and pick distinct nodes $i,j\in\cV$.
Put $\delta=\phi(i,j)$, and take $\cW\subseteq\cV$ to be the set of all nodes $k\in\cV$ such that $d(i,k)\leq\delta, d(j,k)\leq\delta$.
Notice that $\cW$ is nonempty.
Define the dissimilarity function $d'\in\cD_{ij}$ such that for all $k\in\cW$, we have $d'(i,k)=d'(j,k)=2\delta$.
Letting $(\cV,\phi')=\xi_T((\cV,d',F))$, one can verify that $\phi'(i,j)>\delta$, so that the triangle-linkage procedure is graph-aware, as desired.

\parheading{Consistency.}
Since the other properties hold, \cref{thm:impossible} implies that the triangle-linkage procedure is not consistent.
To verify this, consider two featured dissimilarity networks, $N_1=(\{i_1,j_1,k_1\},d_1,F_1), N_2=(\{i_2,j_2,k_2\},d_2,F_2)$, where $d_1,d_2$ are defined so that
\begin{align*}
  d_1(i_1,j_1)&=\delta & d_2(i_2,j_2)&=\delta/2 \\
  d_1(i_1,k_1)&=\delta & d_2(i_2,k_2)&=2\delta  \\
  d_1(j_1,k_1)&=\delta & d_2(j_2,k_2)&=2\delta,
\end{align*}
for some $\delta>0$.
Letting $(\{i_1,j_1,k_1\},\phi_1)=\xi_T(N_1), (\{i_2,j_2,k_2\},\phi_2)=\xi_T(N_2)$, we see that $\phi_1(i_1,j_1)=\delta$ and $\phi_2(i_2,j_2)=2\delta$.
Define the CoDNI map $\Psi:\{i_1,j_1,k_1\}\to\{i_2,j_2,k_2\}$ so that
\begin{equation}
  \Psi
  \begin{cases}
    i_1\mapsto i_2 \\
    j_1\mapsto j_2 \\
    k_1\mapsto j_2.
  \end{cases}
\end{equation}
Despite $\Psi$ being a CoDNI map, we have $\phi_1(i_1,j_1)<\phi_2(\Psi(i_1),\Psi(j_1))$, so that the triangle-linkage procedure is not consistent.
\end{proof}

\subsection{Spectral Embedding}

While the above single-linkage and triangle-linkage procedures are combinatorial methods for embedding dissimilarity networks, \emph{spectral methods} are also a common approach~\cite{belkin2003laplacian}.
In particular, one constructs a graph given pairwise distances between nodes, and weights the edges of that graph based on some kernel.
Then, spectral methods are used to embed the nodes in Euclidean space.
We consider a very simple procedure inspired by this, where the leading eigenvector of the constructed adjacency matrix is used to embed nodes in $\bbR$, taking the standard Euclidean metric as the embedding distance.

Let $\kappa:\bbRp\times\cS\times\cS\to\bbRp$ be a continuous (with respect to the real-valued argument) function mapping dissimilarities and pairs of features to the nonnegative real numbers such that $\kappa(0;s_1,s_1)=1, \kappa(x;s_1,s_2)=\kappa(x;s_2,s_1)>0$ for all $x\in\bbRp, s_1,s_2\in\cS$ and is monotonically decreasing with respect to $x$ and $\rho(s_1,s_2)$, where $\kappa(x;s_1,s_2)\to 0$ as $x\to\infty$.
For some $\cV=\{1,2,\ldots,n\}$, let $N=(\cV,d,F)$ be an $\cS$-featured dissimilarity network, and construct an adjacency matrix $\bA\in\bbR^{n\times n}$ where $A_{ij}=\kappa(d(i,j);F(i),F(j))$.
Let $\bu$ be a normalized leading eigenvector of $\bA$, with corresponding eigenvalue $\lambda>0$, as guaranteed by the Perron-Frobenius theorem.
Here, $\bu$ is the \emph{eigenvector centrality vector} of $\bA$~\cite{Bonacich:1987}.
Denoting the node embedding of $N$ as $(\cV,\phi)=\xi_C(N)$, the embedding dissimilarity $\phi$ is defined for each pair of nodes $i,j\in\cV$ as
\begin{equation}
    \phi(i,j) = \sqrt{\lambda}|u_i-u_j|.
\end{equation}
We refer to this embedding as the \emph{eigenvector centrality embedding}.
Its behavior with respect to \cref{thm:impossible} is characterized as follows:
\begin{prop}\label{prop:eigvec-fails}
  The eigenvector centrality embedding procedure is \hyperref[prpt:contained]{self-contained} and  \hyperref[prpt:aware]{graph-aware}, but not \hyperref[prpt:consistent]{consistent}.
\end{prop}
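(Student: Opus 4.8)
The plan is to verify \hyperref[prpt:contained]{self-containedness} and \hyperref[prpt:aware]{graph-awareness} directly from the structure of the Perron eigenvector of the kernel matrix $\bA$, and then to deduce that \hyperref[prpt:consistent]{consistency} must fail, supplying an explicit CoDNI map to witness this rather than merely invoking \cref{thm:impossible}. Self-containedness is immediate: for a two-node network $N_2=(\{i,j\},d,F)$ with $\alpha=d(i,j)$, the matrix $\bA$ has diagonal entries $1$ and off-diagonal entries $\beta=\kappa(\alpha;F(i),F(j))>0$, so its leading eigenvalue is $1+\beta$ with normalized leading eigenvector proportional to $(1,1)^{\top}$; hence $\phi(i,j)=\sqrt{\lambda}\,|u_i-u_j|=0$, and \cref{prpt:contained} holds with $g\equiv 0$.

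For graph-awareness, I would fix $N=(\cV,d,F)$ on $n\geq 3$ nodes, a pair $i\neq j$, and write $\beta=\kappa(d(i,j);F(i),F(j))>0$. The idea is to produce the required $d'\in\cD_{ij}$ as a limit: let $d'_t$ add $t>0$ to a chosen collection of off-diagonal dissimilarities while keeping $d(i,j)$ fixed, so that $d'_t\in\cD_{ij}$, and send $t\to\infty$. Because $\kappa$ is continuous and tends to $0$ at infinity, $\bA'_t$ converges to an explicit limit matrix; each $\bA'_t$ is strictly positive, and the limit has a simple leading eigenvalue, so the normalized nonnegative Perron eigenvector, and therefore $\phi'_t(i,j)$, converges to the value read off from the limit matrix. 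I would then split into two cases. If $\phi(i,j)>0$, inflate every dissimilarity other than $d(i,j)$; the limit matrix has diagonal $1$, entries $\beta$ at positions $(i,j)$ and $(j,i)$, and zeros elsewhere, so its leading eigenvector has equal $i$- and $j$-entries and the limiting embedded distance is $0\neq\phi(i,j)$. If $\phi(i,j)=0$, pick a third node $k$ and inflate every dissimilarity except $d(i,j)$ and $d(j,k)$; writing $\gamma=\kappa(d(j,k);F(j),F(k))>0$, the limit matrix has diagonal $1$, entries $\beta$ at $(i,j),(j,i)$ and $\gamma$ at $(j,k),(k,j)$, and zeros elsewhere, whose leading eigenvector restricted to $\{i,j,k\}$ is proportional to $(\beta,\sqrt{\beta^2+\gamma^2},\gamma)$ and vanishes elsewhere, so $u_i<u_j$ and the limiting embedded distance is strictly positive, hence $\neq\phi(i,j)$. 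In either case, for $t$ large enough $d'_t$ witnesses \cref{prpt:aware}.

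For the failure of consistency, \cref{thm:impossible} already forces it, but I would make it explicit. Take $\cS$ to be a one-point space with constant feature maps, and choose $0<\delta'<\delta$ with $\kappa(\delta';s,s)>\kappa(\delta;s,s)$, possible since $\kappa(\cdot;s,s)$ is continuous, decreasing, and nonconstant. Let $N_1=(\{1,2,3\},d_1,F)$ have all off-diagonal dissimilarities equal to $\delta$, and let $N_2=(\{1,2,3\},d_2,F)$ have $d_2(1,2)=d_2(1,3)=\delta$ and $d_2(2,3)=\delta'$; the identity map $\Psi:\{1,2,3\}\to\{1,2,3\}$ is then CoDNI. The matrix $\bA_1$ has all off-diagonal entries equal, so its leading eigenvector is constant and $\phi_1(1,2)=0$, whereas a short computation with $\bA_2$ exploiting the symmetry between nodes $2$ and $3$ gives $u_1<u_2$, hence $\phi_2(1,2)>0$. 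Thus $\phi_1(1,2)<\phi_2(\Psi(1),\Psi(2))$, violating \cref{prpt:consistent}.

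The main obstacle is the graph-awareness step. One has to be careful that the limiting adjacency matrices, although reducible, still possess a \emph{simple} leading eigenvalue, which holds because the relevant irreducible block has Perron root strictly greater than the self-loop weight $1$ of the isolated nodes, so that the Perron pair and hence $\phi'_t(i,j)$ genuinely converge; and in the case $\phi(i,j)=0$ one must check that the limiting embedded distance is truly positive rather than accidentally $0$, which is exactly what the path construction on $\{i,j,k\}$ with $\gamma>0$ secures.
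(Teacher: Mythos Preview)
Your proposal is correct and follows the same high-level outline as the paper---verify self-containedness by computing $\phi(i,j)=0$ on two-node networks, establish graph-awareness by a limiting/perturbation argument on the adjacency matrix, and exhibit an explicit CoDNI counterexample for consistency---but the concrete constructions differ in two places. For graph-awareness, the paper fixes a third node and builds two target $3\times 3$ blocks (one symmetric in $i,j$ giving embedded distance $0$, one asymmetric giving a nonzero distance), approximates each by some $\widehat d\in\cD_{ij}$, and invokes the Davis--Kahan $\sin\theta$ theorem to transfer the eigenvector conclusion; your version instead does a clean case split on whether $\phi(i,j)=0$, sending all (or all but one) of the other dissimilarities to infinity and reading the answer off the resulting two-node or three-node path block, using only continuity of a simple leading eigenpair. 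Your argument is a bit more elementary and avoids the explicit appeal to Davis--Kahan, at the cost of having to check simplicity of the Perron root for the reducible limit matrices (which you do). For consistency, the paper uses the inclusion of a $2$-node network into a $3$-node network as the CoDNI map (exploiting that any $2$-node network embeds to distance $0$), whereas you stay on $3$ nodes and use the identity map after shrinking a single edge; both are valid, and the paper's choice is marginally shorter since the $2$-node side is already computed. One cosmetic point: rather than ``take $\cS$ to be a one-point space,'' it is cleaner to fix any $s\in\cS$ and use constant feature maps, since the proposition is stated for a general feature space.
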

We leave the proof to \cref{app:spectral}.
This is not too surprising, though, since eigenvectors of matrices are \emph{global} quantities of the matrix structure, so graph-awareness can be readily demonstrated.
Moreover, the eigenvector centrality is not sensitive to permutations of nodes, so that self-containedness holds as well.

\section{Kleinberg's Impossibility Theorem}
\label{sec:clustering}

\citeauthor{Kleinberg:2002}~\cite{Kleinberg:2002} established three axioms for \emph{clustering functions}, which take as input a set of points $\cV$ coupled with a dissimilarity function $d$, and yields a partition of $\cV$.
By appropriately defining the notions of \emph{scale-invariance,} \emph{richness,} and \emph{consistency,}%
\footnote{Consistency as defined by Kleinberg is distinct from \cref{prpt:consistent}, although related in studying maps that are contractive in some sense.}
he proved an impossibility result for clustering functions, namely that no clustering function satisfies these three properties simultaneously.

Our results draw great inspiration from this approach, and indeed apply to clustering functions as special types of node embedding functions.
To see this, let a partition of $\cV$ be given, so that $\cV=\bigcup_{j=1}^L \cV_j$, where $\cV_j\cap\cV_k=\emptyset$ for $j\neq k$.
For each $v\in\cV$, there exists a unique integer $j$ such that $v\in\cV_j$.
If two (not necessarily distinct) nodes $v,w\in\cV$ are contained in the same partition $\cV_j$, we say that $v\sim w$.
For some fixed value $\epsilon>0$, define a pseudodissimilarity function on $\cV$:
\begin{equation}
    \phi(v,w)=
    \begin{cases}
    0 & v\sim w \\
    \epsilon & \text{otherwise}.
    \end{cases}
\end{equation}
One can see that $\phi$ encodes the partition $\{\cV_j\}_{j=1}^L$, where two distinct nodes are in the same partition if and only if they have dissimilarity $0$, and are in differing partitions if and only if they have dissimilarity $\epsilon$.
One can see that there is an injection from the set of partitions of $\cV$ and all such pseudodissimilarity functions on $\cV$.
Thus, our axioms yield a new impossibility result for clustering, as a particular type of node embedding.

\section{Proof of \Cref{thm:impossible}}
\label{sec:proof}

Let $(\cS,\rho)$ be a dissimilarity space.
Suppose, for the sake of contradiction, that $\xi:\cN(\cS)\to\cM$ is a node embedding function that is self-contained, consistent, and graph-aware.
For some $n\geq 3$, let $\cV=\{1,2,\ldots,n\}$ and some $F:\cV\to\cS$ be given.
Pick $i,j\in\cV$ that achieves minimum dissimilarity in $\cS$, so that
\begin{equation}\label{eq:proof:impossible:minimize}
  i,j\in\argmin_{i',j'\in\cV:i'\neq j'}\rho(F(i),F(j)),
\end{equation}
assuming without loss of generality that $i<j$.
Fix some $\delta>0$ and define a dissimilarity function $d$ on $\cV$ so that $d(k,\ell)=\delta$ for all $k\neq\ell$, taking value $0$ otherwise.
This yields a featured dissimilarity network $N=(\cV,d,F)$.

Since $\xi$ is graph-aware, choose a dissimilarity function $d_1\in\cD_{ij}$ so that if $N_1=(\cV,d_1,F)$ and $(\cV,\phi)=\xi(N), (\cV,\phi_1)=\xi(N_1)$, we have $\phi(i,j)\neq\phi_1(i,j)$.

Additionally, put $N_2=(\{i,j\},d\big|_{\{i,j\}^2},F\big|_{\{i,j\}})$.
That is to say, $N_2$ is the restriction of $N$ to the nodes $i,j$.
Put $(\{i,j\},\phi_2)=\xi(N_2)$, so that by self-containment we have
\begin{equation}\label{eq:proof:impossible:contained1}
  \phi_2(i,j)=g(F(i),F(j);d(i,j))=g(F(i),F(j);\delta),
\end{equation}
where $g$ is the function whose existence is guaranteed by \cref{prpt:contained}.

Consider the map $\Psi:\{i,j\}\hookrightarrow\cV$ defined via inclusion, so that $\Psi(i)=i, \Psi(j)=j$.
Notice that $\Psi$ is a CoDNI map from $N_2$ to $N_1$.
By consistency of $\xi$, then, we have
\begin{equation}\label{eq:proof:impossible:bound1}
  \phi_1(i,j)\leq\phi_2(i,j)=g(F(i),F(j);\delta),
\end{equation}
where the equality follows from~\eqref{eq:proof:impossible:contained1}.
Similarly, consider the map $\Psi':\cV\to\{i,j\}$ where $\Psi'(k)=i$ for $k\leq i$ and $\Psi'(k)=j$ for $k>i$ (in particular, $\Psi'(j)=j$).
For any $k,\ell\in\cV$ such that $k\neq\ell$, we have
\begin{equation}\label{eq:proof:impossible:featurecontraction}
  \rho(F(k),F(\ell))
  \overset{(a)}{\geq}\rho(F(i),F(j))
  \overset{(b)}{\geq}\rho(F(\Psi'(k)),F(\Psi'(\ell))),
\end{equation}
where $(a)$ follows from our choice of $i,j$ in \eqref{eq:proof:impossible:minimize}, and $(b)$ follows from the fact that $\rho(F(\Psi'(k)),F(\Psi'(\ell)))$ is either equal to $\rho(F(i),F(j))$ or zero.
Moreover, again for $k\neq\ell$,
\begin{equation}\label{eq:proof:impossible:distancecontraction}
  d_1(k,\ell)
  \overset{(a)}{\geq}d(k,\ell)
  =\delta
  \geq d_2(\Psi'(k),\Psi'(\ell)),
\end{equation}
where $(a)$ follows from our choice of $d_1\in\cD_{ij}$.

Examining \eqref{eq:proof:impossible:featurecontraction} and \eqref{eq:proof:impossible:distancecontraction}, we see that $\Psi'$ is a CoDNI map from $N_1$ to $N_2$.
By consistency of $\xi$, we have
\begin{equation}\label{eq:proof:impossible:bound2}
  \phi_1(i,j)
  \geq\phi_2(i,j)
  \overset{(a)}{=}g(F(i),F(j);\delta),
\end{equation}
where $(a)$ invokes \eqref{eq:proof:impossible:contained1}.

Combining the inequalities \eqref{eq:proof:impossible:bound1} and \eqref{eq:proof:impossible:bound2}, we have
\begin{equation}\label{eq:proof:impossible:bound3}
  \phi_1(i,j)=g(F(i),F(j);\delta).
\end{equation}
The same argument can be applied to $N$ (that is, constructing $\Psi,\Psi'$ between $N$ and $N_2$), yielding
\begin{equation}\label{eq:proof:impossible:bound4}
  \phi(i,j)=g(F(i),F(j);\delta).
\end{equation}
Combining \eqref{eq:proof:impossible:bound3} and \eqref{eq:proof:impossible:bound4}, we see that
\begin{equation}\label{eq:proof:contradiction}
  \phi(i,j)=\phi_1(i,j),
\end{equation}
contradicting our choice of $d_1\in\cD_{ij}$ so that $\phi(i,j)\neq\phi_1(i,j)$, as desired.\hfill$\square$

\begin{remark}
In the proof of \cref{thm:impossible}, the networks $N,N_2$ have dissimilarity functions that satisfy the triangle inequality, and are thus metrics.
If one restricts the domain of node embedding functions to dissimilarity networks with metric dissimilarities, our impossibility result still holds.
\end{remark}

\section{Relaxations}
\label{sec:relax}

Although all node embedding methods must violate at least one of the three properties we put forth due to \cref{thm:impossible}, it is worthwhile to study relaxations of these properties that yield admissible node embeddings.
In this section, we consider the implications of relaxing graph-awareness and consistency.

\subsection{Weak Graph-Awareness}
\label{sec:relax:aware}

Recall the definition of graph-awareness, where the embedding dissimilarity of two nodes $i,j$ must be sensitive to sufficient increases in the values taken by the network dissimilarity function, described by the set $\cD_{ij}$.
Here, we consider a relaxation of this property, where we allow for any perturbation of the network dissimilarity that preserves the dissimilarity between nodes $i,j$.
We refer to this property as \emph{weak graph-awareness}:
\begin{property}[Weak graph-awareness]\label{prpt:weakly-aware}
  Consider a featured dissimilarity network $N=(\cV,d,F)$ on at least $3$ nodes, and consider an arbitrary pair of nodes $i,j\in\cV$ such that $i\neq j$.
  Let $\cC_{ij}$ be the set of distance functions $d'$ such that $d'(i,j)=d(i,j)$.
  For some node embedding function $\xi$, if for all such networks there exists some $d'\in\cC_{ij}$ such that the embeddings $(\cV,\phi)=\xi(N), (\cV,\phi')=\xi((\cV,d',F))$ satisfy $\phi(i,j)\neq\phi'(i,j)$, then we say that $\xi$ is \emph{weakly graph-aware}.
\end{property}
Observe that all graph-aware node embedding functions are weakly graph-aware, since $\cD_{ij}\subseteq\cC_{ij}$.
Looking back to our analysis of the single-linkage procedure, this relaxation is sufficient for the single-linkage embedding to satisfy all three axioms.

\begin{prop}
  The single-linkage procedure is \hyperref[prpt:contained]{self-contained}, \hyperref[prpt:consistent]{consistent}, and \hyperref[prpt:weakly-aware]{weakly graph-aware}.
\end{prop}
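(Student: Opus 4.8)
By \cref{prop:single-linkage-fails}, the single-linkage procedure is already known to be \hyperref[prpt:contained]{self-contained} and \hyperref[prpt:consistent]{consistent}, and those verifications never reference the graph-awareness axiom, so they carry over verbatim. Hence the only new content to establish is \hyperref[prpt:weakly-aware]{weak graph-awareness}. The plan is to fix an arbitrary featured dissimilarity network $N=(\cV,d,F)$ with $|\cV|\geq 3$ and an arbitrary pair of distinct nodes $i,j\in\cV$, write $\alpha=d(i,j)>0$ and let $\phi(i,j)$ denote the single-linkage embedding distance given by \eqref{eq:single-linkage-distance}, and then exhibit a single explicit $d'\in\cC_{ij}$ for which $\phi'(i,j)\neq\phi(i,j)$.

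First I would pick any third node $k\in\cV\setminus\{i,j\}$, which exists since $|\cV|\geq 3$, and then choose a value $t\in(0,\alpha)$ with $t\neq\phi(i,j)$; such a $t$ exists because $(0,\alpha)$ is uncountable and we are excluding only a single point. Define a dissimilarity function $d'$ on $\cV$ by setting $d'(i,j)=\alpha$, $d'(i,k)=d'(k,j)=t$, and $d'(x,y)=\alpha$ for every other pair of distinct nodes. This $d'$ is symmetric and vanishes exactly on the diagonal, hence is a valid dissimilarity function, and $d'(i,j)=d(i,j)$ shows $d'\in\cC_{ij}$.

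The next step is to compute $\phi'(i,j)$ for $(\cV,\phi')=\xi((\cV,d',F))$ via \eqref{eq:single-linkage-distance}. Since $t<\alpha$, the smallest value taken by $d'$ on any edge is $t$; because $i\neq j$, every path in $\cP_{ij}$ uses at least one edge, so the maximum edge dissimilarity along any such path is at least $t$, giving $\phi'(i,j)\geq t$. On the other hand, the path $[i,k,j]$ uses only the edges $(i,k)$ and $(k,j)$, each of dissimilarity $t$, so $\phi'(i,j)\leq t$. Therefore $\phi'(i,j)=t\neq\phi(i,j)$ by our choice of $t$. As $N$, $i$, and $j$ were arbitrary, this establishes weak graph-awareness, and combined with \cref{prop:single-linkage-fails} completes the proof.

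The argument is essentially immediate once one notices that weak graph-awareness is a much weaker requirement than graph-awareness: it only asks for the existence of \emph{some} perturbing dissimilarity function (not necessarily one dominating $d$ pointwise), and the freedom to reassign all dissimilarities not involving the pair $i,j$ lets us route a two-edge detour $[i,k,j]$ through a third node whose single-linkage contribution can be tuned to any value in $(0,d(i,j))$. The only point requiring a moment's care is checking that no competing path, including ones that revisit vertices, can undercut $[i,k,j]$ — which holds because, by construction, $t$ is the minimum dissimilarity attained by $d'$ on any edge.
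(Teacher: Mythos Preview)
Your proof is correct and follows essentially the same approach as the paper: after invoking \cref{prop:single-linkage-fails} for self-containedness and consistency, you establish weak graph-awareness by routing a two-edge detour $[i,k,j]$ through a third node with small dissimilarity edges, thereby forcing $\phi'(i,j)$ to differ from $\phi(i,j)$. The only cosmetic differences are that the paper sets the detour edges to $\phi(i,j)/2$ and uses the bound $\phi'(i,j)\leq\phi(i,j)/2$ directly, whereas you pick an arbitrary $t\in(0,d(i,j))\setminus\{\phi(i,j)\}$, fully specify $d'$, and compute $\phi'(i,j)=t$ exactly.
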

\begin{proof}
  By \Cref{prop:single-linkage-fails}, the single-linkage procedure is self-contained and consistent, leaving weak graph-awareness to be shown.

  Let a featured dissimilarity network $N=(\cV,d,F)$ on at least $3$ nodes be given, and let $(\cV,\phi)=\xi(N)$ be the single-linkage node embedding.
  Take nodes $i,j\in\cV$ such that $i\neq j$, and put $\alpha=\phi(i,j)$, noting that $\alpha\neq 0$.
  Pick a third node $k\in\cV$, and choose any dissimilarity function $d':\bbZ\times\bbZ\to\bbRp$ such that $d'(i,j)=d(i,j), d'(i,k)=d'(j,k)=\alpha/2$.
  Note that $d'\in\cC_{ij}$.
  Let $(\cV,\phi')=\xi((\cV,d',F))$, and observe that $\phi'(i,j)\leq\alpha/2$.
  Thus, $\phi'(i,j)\neq\phi(i,j)$, as desired.
\end{proof}

The relaxation of graph-awareness to weak graph-awareness can be interpreted as making the node embedding function less sensitive to the global structure of the network.
In doing so, we allow for arbitrary perturbations of the network structure, rather than ones that only increase dissimilarity.

\subsection{Injective Consistency}
\label{sec:relax:consistent}

In the proof of \cref{thm:impossible}, a key step is forming a CoDNI map from a network of many nodes to a network of two nodes.
By the pigeonhole principle, a function mapping a set to another set with smaller cardinality is not injective.
This motivates a stronger requirement on the map $\Psi$ in our definition of consistency, yielding a property that we call \emph{injective consistency}:
\begin{property}[Injective consistency]\label{prpt:inj-consistent}
  Consider two featured dissimilarity networks $N_1=(\cV_1,d_1,F_1), N_2=(\cV_2,d_2,F_2)$ with possibly different numbers of nodes.
  For some node embedding function $\xi$, let $(\cV_1,\phi_1)=\xi(N_1), (\cV_2,\phi_2)=\xi(N_2)$.
  We say that $\xi$ is \emph{injectively consistent} if, for all $i,j\in\cV_1$ and injective CoDNI maps $\Psi:\cV_1\to\cV_2$,
  \begin{equation}
    \phi_1(i,j)\geq\phi_2(\Psi(i),\Psi(j)).
  \end{equation}
\end{property}
Observe that all node embedding functions that are injectively consistent are also consistent, since injective consistency depends on a stronger hypothesis (namely, the CoDNI map $\Psi$ being injective).
We now reconsider the triangle-linkage procedure given this relaxed notion of consistency.

\begin{prop}
  The triangle-linkage procedure is \hyperref[prpt:contained]{self-contained}, \hyperref[prpt:inj-consistent]{injectively consistent}, and \hyperref[prpt:aware]{graph-aware}.
\end{prop}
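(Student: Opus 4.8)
The plan is to recycle the analysis of \cref{prop:motif-linkage-fails} for two of the three properties and supply a genuinely new argument only for injective consistency. Indeed, \cref{prop:motif-linkage-fails} already shows that the triangle-linkage procedure is self-contained (taking $g(F(i),F(j);\alpha)=+\infty$ on $2$-node networks, since such networks have fewer than $3$ nodes) and graph-aware (via the explicit perturbation $d'$ that inflates the dissimilarities from $i$ and $j$ to every node in the witness set $\cW$). Since injective consistency only strengthens the hypothesis on the CoDNI map relative to ordinary consistency, and graph-awareness is unchanged, the sole remaining task is to verify injective consistency directly.

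For injective consistency, I would fix featured dissimilarity networks $N_1=(\cV_1,d_1,F_1)$, $N_2=(\cV_2,d_2,F_2)$, an injective CoDNI map $\Psi:\cV_1\to\cV_2$, and nodes $i,j\in\cV_1$. First I would dispatch the degenerate case: if $|\cV_1|<3$ then $\phi_1(i,j)=+\infty$ and the desired inequality $\phi_1(i,j)\geq\phi_2(\Psi(i),\Psi(j))$ holds trivially. Otherwise $|\cV_1|\geq 3$, and because $\Psi$ is injective we get $|\cV_2|\geq|\cV_1|\geq 3$, so $\phi_2$ is governed by the triangle-linkage formula rather than being $+\infty$ by fiat; one also notes that the ``$\min$'' defining $\phi_1(i,j)$ is attained (a finite union of closed half-lines $[\max\{\cdot\},\infty)$ is again such a half-line), so $\delta:=\phi_1(i,j)$ is finite and realized by some third node.

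The heart of the argument is then short. Let $k\in\cV_1$ with $k\neq i$, $k\neq j$ and $\max\{d_1(i,j),d_1(i,k),d_1(j,k)\}\leq\delta$ witness $\phi_1(i,j)=\delta$. Injectivity of $\Psi$ keeps $\Psi(i),\Psi(j),\Psi(k)$ pairwise distinct, so $\Psi(k)$ is an admissible ``third node'' for the pair $\Psi(i),\Psi(j)$ in $N_2$. Because $\Psi$ is CoDNI, each of $d_2(\Psi(i),\Psi(j))$, $d_2(\Psi(i),\Psi(k))$, $d_2(\Psi(j),\Psi(k))$ is at most the corresponding $d_1$-value, hence at most $\delta$; thus $\max\{d_2(\Psi(i),\Psi(j)),d_2(\Psi(i),\Psi(k)),d_2(\Psi(j),\Psi(k))\}\leq\delta$, which forces $\phi_2(\Psi(i),\Psi(j))\leq\delta=\phi_1(i,j)$.

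I do not expect any serious obstacle here; the only point requiring care is precisely the feature that breaks in \cref{prop:motif-linkage-fails}, where the CoDNI map collapsed $k_1$ onto $j_2$ and thereby destroyed the triangle certifying the small embedded dissimilarity in $N_1$. Demanding $\Psi$ be injective is exactly what rules out this collapse, so the ``work'' reduces to observing that injectivity transports a triangle witnessing $\phi_1(i,j)\leq\delta$ to a triangle witnessing $\phi_2(\Psi(i),\Psi(j))\leq\delta$. The feature-space inequality in the CoDNI definition plays no role, since the triangle-linkage procedure ignores node features entirely.
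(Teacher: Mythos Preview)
Your proposal is correct and matches the paper's proof essentially line for line: both invoke \cref{prop:motif-linkage-fails} for self-containedness and graph-awareness, then prove injective consistency by transporting a witnessing triangle $\{i,j,k\}$ through the injective CoDNI map $\Psi$ and using the dissimilarity non-increasing property to bound $\phi_2(\Psi(i),\Psi(j))\leq\delta$. Your treatment is in fact slightly more careful than the paper's, since you explicitly dispatch the degenerate case $|\cV_1|<3$ (where $\phi_1(i,j)=+\infty$) and note that injectivity forces $|\cV_2|\geq 3$ so that the triangle-linkage formula governs $\phi_2$; the paper leaves these points implicit.
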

\begin{proof}
  By \Cref{prop:motif-linkage-fails}, the triangle-linkage procedure is self-contained and graph-aware, leaving injective consistency to be shown.

  Let two featured dissimilarity networks $N_1=(\cV_1,d_1,F_1), N_2=(\cV_2,d_2,F_2)$ be given, and suppose there exists an injective CoDNI map $\Psi:\cV_1\to\cV_2$.
  Put $(\cV_1,\phi_1)=\xi_T(N_1), (\cV_2,\phi_2)=\xi_T(N_2)$.
  For any distinct nodes $i,j\in\cV_1$, put $\delta=\phi_1(i,j)$, so that there exists $k\in\cV_1$ such that $k\neq i, k\neq j$, and $\max\{d_1(i,j),d_1(i,k),d_1(j,k)\}=\delta$.
  Since $\Psi$ is an injective CoDNI map, $\Psi(i),\Psi(j),\Psi(k)$ are distinct elements of $\cV_2$, and $\max\{d_2(\Psi(i),\Psi(j)),d_2(\Psi(i),\Psi(k)),d_2(\Psi(j),\Psi(k))\}\leq\delta$, so that $\phi_2(\Psi(i),\Psi(j))\leq\delta$.
  That is to say, the triangle-linkage procedure is injectively consistent, as desired.
\end{proof}

In the same way that weak graph-awareness makes the node embedding less sensitive to changes in the global network structure, relaxing consistency to injective consistency weakens the sensitivity of node embedding functions to local changes in the network structure.
This illustrates the fundamental tradeoff described by \cref{thm:impossible}: a node embedding function can either be sensitive to global structure or local structure, but not both.

\section{Related Work}
\label{sec:related}

Node embedding is a rich, highly active research topic, with existing works falling into several categories~\cite{zhang2018network, cai2018comprehensive, goyal2018graph}.
We briefly survey methods based on matrix factorization, word embeddings, and neural architectures.

\parheading{Matrix factorization.}
The first approach is based on matrix factorization, where some matrix $S$ characterizing node similarities is factorized as $X^{\top}X$ or $X^{\top}Y$.
Then, the columns of $X$ (and possibly $Y$) are used as node embeddings in Euclidean space.
Many such methods have been proposed that consider different similarity matrices $S$ and different approximation schemes.
Approaches range from classical methods such as spectral clustering~\cite{von2007tutorial}, locally linear embedding~\cite{roweis2000nonlinear}, Laplacian eigenmaps~\cite{belkin2003laplacian}, multidimensional scaling~\cite{cox2008multidimensional} and Isomap~\cite{tenenbaum2000global}, as well as modern approaches such as GraRep~\cite{grarep}, HOPE~\cite{hope}, and NetMF~\cite{Qiu:2018}.

\parheading{Word embedding.}
Distinct from methods based on matrix factorization, many node embedding methods are derivatives of word2vec~\cite{word2vec_1,word2vec_2}, which learns embeddings of words from a corpus of text.
The earliest such approach is DeepWalk~\cite{deepwalk}, which treats nodes as words and random walks as sentences, so that word2vec can be directly applied to generate node embeddings.
Variants of DeepWalk consider different neighborhood sampling strategies:
for example, the well-known node2vec algorithm~\cite{node2vec} employs biased second-order random walks.
It has been proven that many of these methods \emph{implicitly} factorize particular similarity matrices~\cite{word_mf, Qiu:2018}, providing a link to more classical approaches.

\parheading{Neural architectures.}
The third category is inspired by the recent success of geometric deep learning~\cite{Bronstein:2017}.
Many such works apply autoencoder architectures to construct meaningful node embeddings,
including DNGR~\cite{cao2016deep}, SDNE~\cite{wang2016structural}, and ARGA/ARVGA~\cite{pan2018adversarially}.
Moreover, instead of taking only the graph structure into consideration, approaches have also been proposed which are able to incorporate additional features on the nodes and edges~\cite{shervashidze2011weisfeiler, hamilton2017inductive, wang2017knowledge}. 

\parheading{Theoretical analysis of general node embeddings.}
Although various node embedding methods have been proposed, theoretical analysis regarding their properties and limitations independent of any particular algorithm is sparsely found in the literature.
\cite{Srinivasan:2019} studied the relationships between node embeddings in Euclidean space and so-called ``structural representations.'' 
A recent work~\cite{seshadhri2020impossibility} proves that graphs generated from low-dimensional embeddings (using dot products as a similarity measure) cannot be both sparse and have high triangle density, two hallmarks of real-word networks.
However, a follow up paper~\cite{chanpuriya2020node} shows that the impossibility results in~\cite{seshadhri2020impossibility} are a consequence of the specific model considered. 
Specifically, the analysis in~\cite{seshadhri2020impossibility} associates each node with a single embedding vector, yielding a matrix factorization of the form $X^{\top}X$, which is necessarily positive semidefinite (PSD).
The paper~\cite{chanpuriya2020node} relaxes this model, allowing each node to be associated to two embedding vectors, corresponding to the non-PSD factorization $X^{\top}Y$.
This relaxation allows low-dimensional embeddings to capture sparse graphs with high triangle density.

\parheading{Axiomatic approaches.}
In this work, we provide theoretical insights for node embedding using an axiomatic framework.
A similar approach was originally taken by \cite{Kleinberg:2002} in the context of clustering, which proposes three axioms -- namely scale invariance, richness, and consistency -- and shows that it is impossible for a clustering function to simultaneously satisfy all of them.
Indeed, a small cottage industry emerged from the work~\cite{Kleinberg:2002}, in which suitable relaxations of these clustering axioms are formed to yield possibility and uniqueness results~\cite{Ben:2008,Carlsson:2013,Cohen:2018}.
Beyond clustering, axiomatic approaches have been widely adopted in other research areas including recommendation systems~\cite{pennock2000social, andersen2008trust}, computer vision~\cite{kenney2005axiomatic, chessel2006interpolating}, resource allocation~\cite{lan2010axiomatic}, and perhaps most famously in social choice theory~\cite{Arrow:1963}.

\section{Conclusion}
\label{sec:conclusion}

We present the first axiomatic analysis of node embeddings for featured networks.
Beyond establishing fundamental trade-offs inherent to the problem of node embedding, the impossibility result stated in \cref{thm:impossible} motivates a more thorough look at the nature of the node embedding problem.
In the context of clustering, \cite{Carlsson:2013} bypasses the impossibility results of \cite{Kleinberg:2002} by considering a relaxed codomain consisting of persistent clusterings, rather than simple partitions (see \cite[Section~1.3]{Riehl:2017} for a discussion on this).
Similarly, other works such as \cite{Ben:2008,Cohen:2018} show how clustering informed by a particular loss function can satisfy a similar set of axioms.
We envision the proposed framework motivating similar advances in the field of node embedding, ultimately redounding in better theoretical understanding and practical algorithms.

\section*{Acknowledgements}
This work was partially supported by NSF under award \href{https://www.nsf.gov/awardsearch/showAward?AWD_ID=2008555&HistoricalAwards=false}{CCF-2008555}.
Research was sponsored by the Army Research Office and was accomplished under Cooperative Agreement Number W911NF-19-2-0269.
The views and conclusions contained in this document are those of the authors and should not be interpreted as representing the official policies, either expressed or implied, of the Army Research Office or the U.S. Government.
The U.S. Government is authorized to reproduce and distribute reprints for Government purposes notwithstanding any copyright notation herein.
TMR was partially supported by the Ken Kennedy Institute 2020/21 Exxon-Mobil Graduate Fellowship.

\printbibliography

\newrefsection
\appendix

\section*{Appendix}

\section{A Categorical Variation of \cref{thm:impossible}}\label{app:categorical}

We frame the study of node embedding functions in terms of functors between categories, inspired by the work of~\cite{Carlsson:2013} on clustering.
In order to do so, let us first recall basic definitions in category theory.
We refer the reader to the books~\cite{MacLane:1971,Riehl:2017} for further reference.

\begin{defn}
A \emph{category} $\sC$ consists of:
\begin{enumerate}
    \item A collection of \emph{objects} $\ob(\sC)$
    \item For each $X,Y\in\ob(\sC)$, a collection of \emph{morphisms} $\mor_\sC(X,Y)$,
\end{enumerate}
such that
\begin{enumerate}
    \item For each $X\in\ob(\sC)$, there is a distinguished \emph{identity morphism} $\id_X\in\mor_\sC(X,X)$
    \item There is a \emph{composition map}, so that for any $X,Y,Z\in\ob(\sC)$,
    \begin{equation}
        \circ:\mor_\sC(X,Y)\times\mor_\sC(Y,Z)\to\mor_\sC(X,Z)
    \end{equation}
    where $\circ$ is associative
    \item For any $X,Y\in\ob(\sC), f\in\mor_\sC(X,Y), g\in\mor_\sC(Y,X)$, we have $\id_Y\circ f=g\circ\id_X$.
\end{enumerate}
\end{defn}
A type of morphism of particular interest is an isomorphism.
\begin{defn}
Let $\sC$ be a category, and let $X,Y\in\ob(\sC),f\in\mor_\sC(X,Y)$ be given.
If there exists $g\in\mor_\sC(Y,X)$ such that $g\circ f=\id_X, f\circ g=\id_Y$, we say that $f$ is an \emph{isomorphism}.
If there exists such an isomorphism, we say that $X$ and $Y$ are \emph{isomorphic} in $\sC$.
\end{defn}

The canonical example of a category is $\sSet$, where $\ob(\sSet)$ is the collection of sets, and $\mor_\sSet(X,Y)$ consists of all functions from $X$ to $Y$.
We also define the following relevant categories.

\begin{defn}
Let $\cS$ be a dissimilarity space.
The category of $\cS$-featured dissimilarity networks, denoted $\sN$, is such that
\begin{enumerate}
    \item $\ob(\sN)$ is the collection of all triples $(\cV,d,F)$, where $\cV$ is a finite set, $d$ is a dissimilarity function on $\cV$, and $F:\cV\to\cS$
    \item For $N_1,N_2\in\ob(\sN)$, $\mor_\sN(N_1,N_2)$ is the collection of all CoDNI maps from $N_1$ to $N_2$
    \item For $N\in\ob(\sN)$, $\id_N$ is defined in the obvious way, and composition is also defined as expected.
\end{enumerate}
\end{defn}

\begin{defn}
The category of finite pseudodissimilarity spaces, denoted $\sM$, is such that
\begin{enumerate}
    \item $\ob(\sM)$ is the collection of all tuples $(\cM,\rho)$, where $\cM$ is a finite set, and $\rho$ is a pseudodissimilarity function on $\cM$
    \item For $(M_1,\rho_1),(M_2,\rho_2)\in\ob(\sM)$, $\mor_\sM(M_1,M_2)$ is the collection of all non-expansive maps from $M_1$ to $M_2$, that is, maps $f:M_1\to M_2$ such that for all $x,y\in M_1$:
    \begin{equation}
        \rho_1(x,y)\geq\rho_2(f(x),f(y))
    \end{equation}
    \item For $M\in\ob(\sM)$, $\id_M$ is defined in the obvious way, and composition is also defined as expected.
\end{enumerate}
\end{defn}

One suspects that node embedding functions take objects in the category $\sN$, and yield corresponding objects in the category $\sM$.
Moreover, these maps should preserve structure in some way.
The way to describe this formally is via a functor:

\begin{defn}
Let $\sC,\sD$ be categories.
A \emph{functor} $F:\sC\to\sD$ consists of
\begin{enumerate}
    \item An \emph{object function} $F:\ob(\sC)\to\ob(\sD)$, whose application we denote by $FX$ for each $X\in\ob(\sC)$
    \item A \emph{morphism function}, so for every $X,Y\in\ob(\sC)$, $F:\mor_\sC(X,Y)\to\mor_\sD(FX,FY)$, whose application we denote by $Ff$ for each $f\in\mor_\sC(X,Y)$,
\end{enumerate}
such that
\begin{enumerate}
    \item For each $X\in\ob(\sC)$, we have $F\id_X=\id_{FX}$
    \item For each $X,Y,Z\in\ob(\sC), f\in\mor_\sC(X,Y), g\in\mor_\sC(Y,Z)$, we have $F(g\circ f)=Fg\circ Ff$.
\end{enumerate}
\end{defn}

Two functors can be composed by composing their respective object and morphism functions.
A particularly useful type of functor is the \emph{forgetful functor}.
For the category of featured dissimilarity networks $\sN$, the forgetful functor $\alpha:\sN\to\sSet$ is such that for any given dissimilarity networks $N_1=(\cV_1,d_1,F_1), N_2=(\cV_2,d_2,F_2)$ in $\ob(\sN)$ and CoDNI map $\Psi:\cV_1\to\cV_2$ in $\mor_\sN(N_1,N_2)$, we have
\begin{equation}
    \begin{aligned}
        \alpha:\ob(\sN)&\to\ob(\sSet) \\
        (\cV_1,d_1,F_1)&\mapsto \cV_1 \\
        \\
        \alpha:\mor_\sN(N_1,N_2)&\to\mor_\sSet(\cV_1,\cV_2) \\
        \Psi&\mapsto\Psi.
    \end{aligned}
\end{equation}
That is, the forgetful functor on $\sN$ discards the dissimilarity and feature information of a network, and preserves morphisms as maps between sets.
Similarly, we define the forgetful functor $\beta:\sM\to\sSet$ for the category of finite pseudodissimilarity spaces so that for any $(M_1,\rho_1),(M_2,\rho_2)\in\ob(\sM)$ and non-expansive map $f:M_1\to M_2\in\mor_\sN((M_1,\rho_1),(M_2,\rho_2))$, we have
\begin{equation}
    \begin{aligned}
        \beta:\ob(\sM)&\to\ob(\sSet) \\
        (M_1,\rho_1)&\mapsto M_1 \\
        \\
        \beta:\mor_\sM((M_1,\rho_1),(M_2,\rho_2))&\to\mor_\sSet(M_1,M_2) \\
        f&\mapsto f.
    \end{aligned}
\end{equation}

Before restating our impossibility result, we define the appropriate notion of graph-awareness in this context.
\begin{property}[Functorial graph-awareness]\label{prpt:cat-aware}
Let $\alpha:\sN\to\sSet$ and $\beta:\sM\to\sSet$ be the forgetful functors as previously described.
Let $\xi:\sN\to\sM$ be a functor such that $\beta\circ\xi=\alpha$.
Thus, for any network $N=(\cV,d,F)$, the finite pseudodissimilarity space $(M,\rho)=\xi N$ satisfies $M=\cV$;
in light of this, we simply write $(\cV,\rho)=\xi N$.
Consider a featured dissimilarity network $N=(\cV,d,F)$ on at least $3$ nodes, and consider an arbitrary pair of distinct nodes $i,j\in\cV$.
Let $\cD_{ij}$ be the set of dissimilarity functions $d'$ on $\cV$ such that $d'(i,j)=d(i,j)$, and for all $k,\ell\in\cV$ it holds that $d'(k,\ell)\geq d(k,\ell)$.
For any such $d'$, note that $\id_\cV\in\mor_\sN(N,(\cV,d',F))$.

If for all such networks $N$ and pairs of distinct nodes $i,j$ there exists a $d'\in\cD_{ij}$ such that putting $(\cV,\rho)=\xi N, (\cV,\rho')=\xi(\cV,d',F)$ yields $\rho(i,j)\neq\rho'(i,j)$, we say that the functor $\xi$ is \emph{graph-aware}.
\end{property}
Note that this definition of graph-awareness for functors is essentially identical to that for node embedding functions, except stated in a way that includes the condition of preserving the underlying finite set.

\begin{reptheorem}{thm:impossible}[Categorical Version]
There exists no \hyperref[prpt:cat-aware]{graph-aware} functor $\xi:\sN\to\sM$.
\end{reptheorem}

\begin{proof}
We prove the categorical version of \cref{thm:impossible} by reducing it to the original statement.
That is, we show that the existence of such a functor would imply the existence of a node embedding function that is self-contained, consistent, and graph-aware, which is absurd.
Let $\xi:\sN\to\sM$ be a functor.

We first consider the implications of the condition $\beta\circ\xi=\alpha$ stated in the redefinition of graph-awareness, which we state in~\cref{lem:cat-isembedding,lem:cat-selfcontained,lem:cat-consistent}.

\begin{lemma}\label{lem:cat-isembedding}
If $\beta\circ\xi=\alpha$, then the object function $\xi:\ob(\sN)\to\ob(\sM)$ is a node embedding function.
\end{lemma}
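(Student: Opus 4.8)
The plan is to observe that this lemma is a direct unwinding of definitions: the hypothesis $\beta\circ\xi=\alpha$ is, when spelled out, nothing other than the requirement that $\xi$ preserves the underlying node set, which is precisely the defining property of a node embedding function. So the proof will consist of making the identifications $\ob(\sN)=\cN(\cS)$ and $\ob(\sM)=\cM$ explicit and then chasing the equality of forgetful functors on a single object.

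Concretely, first I would recall from \cref{sec:prelim} that a node embedding function is a map $\cN(\cS)\to\cM$ subject only to the condition that each $N=(\cV,d,F)$ is sent to a pseudodissimilarity space with the \emph{same} underlying set $\cV$. Since $\ob(\sN)$ is by definition exactly the collection $\cN(\cS)$ of $\cS$-featured dissimilarity networks and $\ob(\sM)$ is exactly $\cM$, the object function $\xi:\ob(\sN)\to\ob(\sM)$ is already a map of the correct type, and the only thing left to verify is the underlying-set condition.

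Next I would fix an arbitrary $N=(\cV,d,F)\in\ob(\sN)$ and write $\xi N=(M,\rho)\in\ob(\sM)$. By the definition of the forgetful functor $\alpha$ we have $\alpha N=\cV$, and by the definition of $\beta$ we have $\beta(\xi N)=M$. The hypothesis $\beta\circ\xi=\alpha$ applied to $N$ then gives $M=\beta(\xi N)=\alpha N=\cV$, so $\xi N=(\cV,\rho)$ for some pseudodissimilarity function $\rho$ on $\cV$. Since $N$ was arbitrary, this is exactly the condition in the definition of a node embedding function, and the object function of $\xi$ is one.

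There is essentially no obstacle here beyond bookkeeping: the one point to be careful about is not to conflate the functor $\xi$ (object function plus morphism function) with its object function alone, and to state clearly that the forgetful functors act on objects by returning the underlying set, so that ``$\beta\circ\xi=\alpha$ on objects'' literally reads ``$\xi$ does not change the node set.'' No inequalities, no constructions, and nothing from the impossibility argument are needed for this step.
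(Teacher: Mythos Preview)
Your proposal is correct and matches the paper's approach exactly: the paper's own proof simply states ``The proof is trivial, by the definition of a node embedding function,'' and what you have written is precisely the unwinding of that triviality via the forgetful functors. There is nothing to add.
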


\begin{subproof}[Subproof]
The proof is trivial, by the definition of a node embedding function.
\end{subproof}

Since the condition $\beta\circ\xi=\alpha$ implies that the object function of $\xi$ is a node embedding function, it makes sense to speak of it being self-contained and consistent.

\begin{lemma}\label{lem:cat-selfcontained}
If $\beta\circ\xi=\alpha$, then the object function $\xi:\ob(\sN)\to\ob(\sM)$ is self-contained.
\end{lemma}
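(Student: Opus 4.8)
The plan is to show that the object function of $\xi$, viewed as a node embedding function (which is legitimate by \cref{lem:cat-isembedding}), satisfies the defining condition of \cref{prpt:contained}. The key observation is that self-containedness is a statement only about two-node networks, and on such networks the functoriality of $\xi$ --- specifically its compatibility with isomorphisms --- forces the embedded dissimilarity to depend only on the data of the network, i.e., on the two features and the single dissimilarity value. So the candidate function $g$ will simply be $g(s_1,s_2;\alpha) := \rho(1,2)$, where $(\{1,2\},\rho) = \xi(\{1,2\}, d, F)$ for the ``canonical'' two-node network with $F(1)=s_1, F(2)=s_2, d(1,2)=\alpha$; the content is checking that this is well-defined, i.e., independent of the particular labels used.

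First I would fix a two-node featured dissimilarity network $N_2 = (\{i,j\}, d, F)$ with $\alpha = d(i,j) > 0$, and let $s_1 = F(i), s_2 = F(j)$. I would then construct the ``reference'' network $N_2^\star = (\{1,2\}, d^\star, F^\star)$ on the node set $\{1,2\}$ with $F^\star(1) = s_1, F^\star(2) = s_2$, and $d^\star(1,2) = \alpha$. Next I would exhibit the relabeling bijection $\Psi : \{i,j\} \to \{1,2\}$ sending $i \mapsto 1, j \mapsto 2$, together with its inverse $\Psi^{-1}$. Because $\rho(F(i),F(j)) = \rho(F^\star(1),F^\star(2))$ and $d(i,j) = d^\star(1,2)$ with equality in both, $\Psi$ and $\Psi^{-1}$ are both CoDNI maps, so $\Psi$ is an isomorphism in $\sN$. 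Applying the functor $\xi$ and using that functors preserve isomorphisms (since $\xi(\Psi^{-1}) \circ \xi(\Psi) = \xi(\id) = \id$ and symmetrically), $\xi(\Psi)$ is an isomorphism in $\sM$ between $\xi N_2 = (\{i,j\},\phi)$ and $\xi N_2^\star = (\{1,2\},\phi^\star)$. Since an isomorphism in $\sM$ is a non-expansive bijection with non-expansive inverse, it is distance-\emph{preserving}, so $\phi(i,j) = \phi^\star(\Psi(i),\Psi(j)) = \phi^\star(1,2)$. As $\phi^\star(1,2)$ depends only on $s_1, s_2, \alpha$, setting $g(s_1,s_2;\alpha) = \phi^\star(1,2)$ yields $\phi(i,j) = g(F(i),F(j);\alpha)$, which is exactly \cref{prpt:contained}.

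The main obstacle --- really the only subtlety --- is the claim that an isomorphism in $\sM$ preserves distances rather than merely not increasing them. This follows because if $f : (M_1,\rho_1) \to (M_2,\rho_2)$ is an isomorphism with inverse $h$, then $\rho_1(x,y) \geq \rho_2(f(x),f(y)) \geq \rho_1(h(f(x)),h(f(y))) = \rho_1(x,y)$, forcing equality throughout; I would state this as a one-line observation. A secondary point worth noting is that the relevant reference network $N_2^\star$ genuinely lies in $\ob(\sN)$: its node set $\{1,2\}$ is finite, $d^\star$ is a valid dissimilarity function since $\alpha > 0$, and $F^\star$ maps into $\cS$ --- all immediate. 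With these two remarks in hand the argument is a short diagram chase, and no computation beyond the three-term inequality above is needed.
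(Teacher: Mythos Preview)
Your proposal is correct and takes essentially the same approach as the paper: both arguments observe that two two-node networks with identical dissimilarity value and features are isomorphic in $\sN$, apply functoriality of $\xi$ together with $\beta\circ\xi=\alpha$ to obtain an isomorphism in $\sM$ on the same underlying sets, and conclude that the embedded dissimilarity depends only on the features and $\alpha$. Your version is slightly more explicit---you spell out why isomorphisms in $\sM$ are distance-preserving and you package the conclusion via a canonical reference network to define $g$---but the underlying idea is identical to the paper's subproof.
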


\begin{subproof}[Subproof]
Consider two dissimilarity networks $N_1=(\{i_1,j_1\},d_1,F_1), N_2=(\{i_2,j_2\},d_2,F_2)$, where $d_1(i_1,j_1)=d_2(i_2,j_2)$ and $F_1(i_1)=F_2(i_2), F_1(j_1)=F_2(j_2)$.
Then, the bijection
\begin{equation}
    \begin{aligned}
        f:\{i_1,j_1\}&\to\{i_2,j_2\} \\
        i_1&\mapsto j_1 \\
        i_2&\mapsto j_2
    \end{aligned}
\end{equation}
is an isomorphism.
Thus, the morphism $\xi f\in\mor_\sM(\xi N_1, \xi N_2)$ is also an isomorphism.
That is to say, the embeddings $(M_1,\rho_1)=\xi N_1, (M_2,\rho_2)=\xi N_2$ under $\xi$ are isomorphic in $\sM$.
By the hypothesis $\beta\circ\xi=\alpha$, we have that $\xi f=f$ and $M_1=\{i_1,j_2\}$ and $M_2=\{i_2,j_2\}$.
One can then check that the isomorphism of these spaces implies $\rho_1(i_1,j_1)=\rho_2(f(i_1),f(i_2))=\rho_2(i_2,j_2)$.
That is, the embedding distance between nodes in a two-node network is invariant under labeling of the nodes, thus satisfying self-containedness.
\end{subproof}

\begin{lemma}\label{lem:cat-consistent}
If $\beta\circ\xi=\alpha$, then the object function $\xi:\ob(\sN)\to\ob(\sM)$ is consistent.
\end{lemma}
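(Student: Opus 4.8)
The plan is to recognize that, under the hypothesis $\beta\circ\xi=\alpha$, consistency is almost a tautological consequence of functoriality: the functor $\xi$ must send each CoDNI morphism of $\sN$ to a non-expansive morphism of $\sM$, and the constraint $\beta\circ\xi=\alpha$ forces that morphism to be the \emph{same} underlying set map. So the argument is essentially bookkeeping with the forgetful functors.

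First I would take arbitrary featured dissimilarity networks $N_1=(\cV_1,d_1,F_1)$ and $N_2=(\cV_2,d_2,F_2)$ and an arbitrary CoDNI map $\Psi:\cV_1\to\cV_2$. By the definition of $\sN$, $\Psi\in\mor_\sN(N_1,N_2)$, so functoriality yields a morphism $\xi\Psi\in\mor_\sM(\xi N_1,\xi N_2)$. By \cref{lem:cat-isembedding} and the hypothesis $\beta\circ\xi=\alpha$, the objects $\xi N_1,\xi N_2$ are pseudodissimilarity spaces on the underlying sets $\cV_1,\cV_2$; write $(\cV_1,\phi_1)=\xi N_1$ and $(\cV_2,\phi_2)=\xi N_2$. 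By the definition of $\sM$, the morphism $\xi\Psi$ is then a non-expansive map, meaning $\phi_1(x,y)\ge\phi_2\big((\xi\Psi)(x),(\xi\Psi)(y)\big)$ for all $x,y\in\cV_1$.

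Next I would identify $\xi\Psi$ with $\Psi$ itself as a function of sets. Applying $\beta$ to $\xi\Psi$ returns its underlying set map; but $\beta\circ\xi=\alpha$ gives $\beta(\xi\Psi)=\alpha\Psi$, and by the definition of the forgetful functor $\alpha$ on $\sN$ we have $\alpha\Psi=\Psi$. Hence $\xi\Psi=\Psi$ as a map $\cV_1\to\cV_2$. Substituting into the non-expansiveness inequality from the previous step gives $\phi_1(i,j)\ge\phi_2(\Psi(i),\Psi(j))$ for all $i,j\in\cV_1$, which is exactly the condition in \cref{prpt:consistent}. Since $N_1,N_2,\Psi$ were arbitrary, the object function $\xi$ is consistent.

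I do not expect a real obstacle; the only point that deserves to be spelled out is the two-fold use of $\beta\circ\xi=\alpha$: once to know that $\xi N_1,\xi N_2$ live on the sets $\cV_1,\cV_2$ (so that the pseudodissimilarity functions $\phi_1,\phi_2$ appearing in the non-expansiveness condition are the ones defining the embeddings), and once to know that the non-expansive map we obtain is literally $\Psi$ rather than some other map covering it.
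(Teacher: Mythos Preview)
Your proposal is correct and follows essentially the same approach as the paper's own subproof: apply functoriality to obtain $\xi\Psi\in\mor_\sM(\xi N_1,\xi N_2)$, use $\beta\circ\xi=\alpha$ on objects and on morphisms to identify the underlying sets as $\cV_1,\cV_2$ and the map $\xi\Psi$ as $\Psi$, and then read off non-expansiveness as the consistency inequality. Your write-up is slightly more explicit about the two separate invocations of $\beta\circ\xi=\alpha$, but there is no substantive difference.
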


\begin{subproof}[Subproof]
Let $N_1=(\cV_1,d_1,F_1), N_2=(\cV_2,d_2,F_2)\in\ob(\cN)$ be given such that there exists a CoDNI map $\Psi:\cV_1\to\cV_2$ in $\mor_\sN(N_1,N_2)$.
Put $(M_1,\rho_1)=\xi N_1, (M_2,\rho_2)=\xi N_2$.
By the hypothesis $\beta\circ\xi=\alpha$, we have $M_1=\cV_1, M_2=\cV_2$, and $\xi\Psi=\Psi$.
Since $\Psi\in\mor_\sM((M_1,\rho_1),(M_2,\rho_2))$, the following holds for all $i,j\in \cV_1$:
\begin{equation}
    \rho_1(i,j)\geq\rho_2(\Psi(i),\Psi(j)),
\end{equation}
so that the object function $\xi:\ob(\sN)\to\ob(\sM)$ satisfies consistency.
\end{subproof}

We now show that assuming graph-awareness of $\xi$ implies that the object function is graph-aware.
By the definition of a functor being graph-aware, $\beta\circ\xi=\alpha$, so that the object function of $\xi$ is a node embedding function by~\cref{lem:cat-isembedding}.
Thus, it makes sense to speak of the object function of $\xi$ being graph-aware as well.

\begin{lemma}\label{lem:cat-aware}
If $\xi$ is graph-aware, then the object function $\xi:\ob(\sN)\to\ob(\sM)$ is graph-aware.
\end{lemma}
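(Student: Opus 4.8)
The plan is to observe that \cref{lem:cat-aware} is essentially a matter of unwinding definitions, the substantive content having already been dispatched in \cref{lem:cat-isembedding}. First I would invoke \cref{lem:cat-isembedding}: the hypothesis that $\xi$ is graph-aware in the sense of \cref{prpt:cat-aware} includes the requirement $\beta\circ\xi=\alpha$, so the object function $\xi:\ob(\sN)\to\ob(\sM)$ is a node embedding function, and in particular for every featured dissimilarity network $N=(\cV,d,F)$ we may write $\xi(N)=(\cV,\phi)$ for an honest embedding dissimilarity $\phi$ on $\cV$. This is precisely what makes it meaningful to ask whether the object function satisfies \cref{prpt:aware}.

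Next I would compare the two notions of graph-awareness term by term. In \cref{prpt:cat-aware}, $N$ ranges over featured dissimilarity networks on at least $3$ nodes, $i,j$ over distinct pairs of nodes, and $\cD_{ij}$ is the set of dissimilarity functions $d'$ on $\cV$ with $d'(i,j)=d(i,j)$ and $d'(k,\ell)\geq d(k,\ell)$ for all $k,\ell\in\cV$; this is the identical quantifier structure and the identical set $\cD_{ij}$ appearing in \cref{prpt:aware}. The only differences are cosmetic: \cref{prpt:cat-aware} writes the output dissimilarity as $\rho$ rather than $\phi$ and carries along the identification $M=\cV$ explicitly, but that identification is exactly what \cref{lem:cat-isembedding} supplies, so writing $(\cV,\phi)=\xi N$ and $(\cV,\phi')=\xi(\cV,d',F)$ we have $\phi=\rho$ and $\phi'=\rho'$. (It is also worth recalling, as noted in \cref{prpt:cat-aware}, that $\id_\cV$ is a CoDNI morphism $N\to(\cV,d',F)$ for each $d'\in\cD_{ij}$, which is why $\xi(\cV,d',F)$ is a legitimate object of $\sM$ to compare against.) Therefore the statement ``for all such $N$ and $i,j$ there exists $d'\in\cD_{ij}$ with $\rho(i,j)\neq\rho'(i,j)$,'' which is the hypothesis of \cref{lem:cat-aware}, is verbatim the assertion that the object function $\xi:\ob(\sN)\to\ob(\sM)$ satisfies \cref{prpt:aware}, and I would conclude the proof there.

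I do not expect any real obstacle: the only subtlety is logical ordering, namely that one must first establish via \cref{lem:cat-isembedding} that the object function is a node embedding function before it even makes sense to apply \cref{prpt:aware} to it. Together with \cref{lem:cat-selfcontained,lem:cat-consistent}, this lemma completes the reduction, since the object function of a graph-aware functor would then be a self-contained, consistent, and graph-aware node embedding function, contradicting \cref{thm:impossible}; I would add a one-line remark to this effect to close out the argument.
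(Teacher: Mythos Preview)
Your proposal is correct and matches the paper's approach: the paper's own subproof is the single sentence ``The proof is trivial, by the definition of graph-awareness for a node embedding function,'' and your write-up simply spells out that triviality by lining up the quantifiers and the set $\cD_{ij}$ in \cref{prpt:cat-aware} against those in \cref{prpt:aware}. One small aside: your parenthetical that the CoDNI morphism $\id_\cV$ is ``why $\xi(\cV,d',F)$ is a legitimate object of $\sM$'' is not quite right---$(\cV,d',F)\in\ob(\sN)$ already guarantees $\xi(\cV,d',F)\in\ob(\sM)$ via the object function, independent of any morphism---but this does not affect the argument.
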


\begin{subproof}[Subproof]
The proof is trivial, by the definition of graph-awareness for a node embedding function.
\end{subproof}

To conclude the proof, suppose for the sake of contradiction that $\xi$ is graph-aware.
Then, by \cref{lem:cat-isembedding,lem:cat-selfcontained,lem:cat-consistent,lem:cat-aware}, the object function $\xi:\ob(\sN)\to\ob(\sM)$ is a node embedding function that is self-contained, consistent, and graph-aware.
This is impossible, as desired.
\end{proof}

\section{Proof of \cref{prop:eigvec-fails}}\label{app:spectral}

As before, we verify this statement for all three properties.

\parheading{Self-containedness.}
Let a dissimilarity network $N_2=(\{i,j\},d,F)$ on two nodes is given.
Put $(\{i,j\},\phi)=\xi_C(N_2)$.
One can easily see that $\phi(i,j)=0$, satisfying self-containedness.

\parheading{Graph-awareness.}
Let a dissimilarity network $N=(\cV,d,F)$ on at least three nodes be given, and identify $\cV=\{1,2,\ldots,|\cV|\}$.
Pick two nodes in $\cV$, and identify them with the integers $1,2$, without loss of generality.
Put $\alpha=\kappa(d(1,2);F(1),F(2))$, and
\begin{equation}
    \beta=\argmin_{\substack{i,j\in\cV \\ j\notin\{1,2\}}}\kappa(d(i,j);F(i),F(j)).
\end{equation}
Consider the following $n\times n$ adjacency matrix:
\begin{equation}
    \mathbf{A}'=
    \begin{bmatrix}
        0 & \alpha & \beta & & & \\
        \alpha & 0 & \beta & & \mathbf{0} & \\
        \beta & \beta & 0  & & & \\
        & & & & & \\
        & \mathbf{0} & & & \mathbf{0} & \\
        & & & & & \\
    \end{bmatrix}.
\end{equation}
One can check that the first and second entries of the leading eigenvector of $\mathbf{A}'$, which we denote $\mathbf{u}'$, are equal, yielding an embedding distance of zero for the two corresponding nodes.
Alternatively, consider the following $n\times n$ matrix for the same values of $\alpha,\beta$:
\begin{equation}\label{eq:asymmetric-adjacency}
    \mathbf{A}''=
    \begin{bmatrix}
        0 & \alpha & \beta   & & & \\
        \alpha & 0 & \beta/2 & & \mathbf{0} & \\
        \beta & \beta/2 & 0  & & & \\
        & & & & & \\
        & \mathbf{0} & & & \mathbf{0} & \\
        & & & & & \\
    \end{bmatrix}.
\end{equation}
In this case, the first and second entries of the leading eigenvector of $\mathbf{A}''$, which we denote $\mathbf{u}''$, are not equal, yielding a nonzero embedding distance for the two corresponding nodes.

For any function $f:\cV\times\cV\setminus\{1,2\}\to\bbRp$ such that the range of $f$ is bounded by $\beta$ and $f(i,j)=0$ if and only if $i=j$, there exists a dissimilarity function $\widehat{d}\in\cD_{12}$ such that $f(i,j)=\kappa(\widehat{d}(i,j);F(i),F(j))$, due to the conditions on the function $\kappa$ and the choice of $\beta$.
Therefore, one can choose dissimilarity functions $\widehat{d}',\widehat{d}''\in\cD_{12}$ such that the adjacency matrix $\widehat{\mathbf{A}}'$ constructed from the dissimilarity network $(\cV,\widehat{d}',F)$ is arbitrarily close to $\mathbf{A}'$ in the Frobenius norm, and the adjacency matrix $\widehat{\mathbf{A}}''$ constructed from the dissimilarity network $(\cV,\widehat{d}'',F)$ is arbitrarily close to $\mathbf{A}''$ in the Frobenius norm.
By choosing such dissimilarity functions, one can find embeddings $(\cV,\widehat{\phi}')=\xi_C((\cV,\widehat{d}',F)),(\cV,\widehat{\phi}'')=\xi_C((\cV,\widehat{d}'',F))$ so that $\widehat{\phi}'(1,2)$ is arbitrarily close to zero, and $\widehat{\phi}''(1,2)$ is arbitrarily close to the nonzero dissimilarity obtained from the leading eigenvector of~\eqref{eq:asymmetric-adjacency}, due to the Davis-Kahan $\sin\theta$ Theorem~\cite{Davis:1970}.
It follows that the eigenvector centrality embedding procedure is graph-aware.

\parheading{Consistency.}
Choose an arbitrary $s\in\cS$, and consider a dissimilarity network $N=(\{1,2\},d,F)$, where $F(1)=F(2)=s$ and $d$ is such that $\kappa(d(1,2);s,s)=0.5$.
Letting $(\{1,2\},\phi)=\xi_C(N)$, we showed previously that $\phi(1,2)=0$.
Now consider a dissimilarity network $N'=(\{1,2,3\},d',F')$, where $F'(1)=F'(2)=F'(3)=s$, and $d'$ is chosen such that $d'(1,2)=d(1,2), \kappa(d'(1,3);s,s)=0.25, \kappa(d'(2,3);s,s)=0.125$, which is possible due to the properties of $\kappa$.
Letting $(\{1,2,3\},\phi')=\xi_C(N')$, one can check that $\phi'(1,2)\neq 0$.
Notice that the inclusion map $\Psi:\{1,2\}\hookrightarrow\{1,2,3\}$ is a CoDNI map from $N$ to $N'$, but $\phi(1,2)\leq\phi'(\Psi(1),\Psi(2))$, violating consistency. \hfill$\square$

\printbibliography

@string{aaai = {AAAI Conference on Artificial Intelligence}}

@string{cikm = {ACM International Conference on Information and Knowledge Management}}

@string{cvpr = {Conference on Computer Vision and Pattern Recognition}}

@string{fcm = {Foundations of Computational Mathematics}}

@string{icassp = {International Conference on Acoustics, Speech, and Signal Processing}}

@string{iclr = {International Conference on Learning Representations}}

@string{infocom = {IEEE Conference on Computer Communications}}

@string{ijcai = {International Joint Conference on Artificial Intelligence}}

@string{jmlr = {Journal of Machine Learning Research}}

@string{kdd = {ACM International Conference on Knowledge Discovery and Data Mining}}

@string{neurips = {Advances in Neural Information Processing Systems}}

@string{pnas = {Proceedings of the National Academy of Sciences}}

@string{spmag = {IEEE Signal Processing Magazine}}

@string{wsdm = {ACM International Conference on Web Search and Data Mining}}

@string{www = {The World Wide Web Conference}}

@book{Arrow:1963,
  title={Social choice and individual values},
  author={Kenneth J Arrow},
  year={1963},
  publisher={Yale University Press},
  url={https://yalebooks.yale.edu/book/9780300179316/social-choice-and-individual-values},
  urldecor={https://yalebooks.yale.edu/}
}

@article{Johnson:1967,
  title={Hierarchical clustering schemes},
  author={Johnson, Stephen C},
  journal={Psychometrika},
  volume={32},
  number={3},
  pages={241--254},
  year={1967},
  publisher={Springer},
  doi={10.1007/bf02289588}
}

@article{Lance:1967,
  title={A general theory of classificatory sorting strategies: 1. {H}ierarchical systems},
  author={Lance, Godfrey N and Williams, William Thomas},
  journal={The Computer Journal},
  volume={9},
  number={4},
  pages={373--380},
  year={1967},
  publisher={The British Computer Society},
  doi={10.1093/comjnl/9.4.373}
}

@article{Davis:1970,
  title={The rotation of eigenvectors by a perturbation. III},
  author={Davis, Chandler and Kahan, William Morton},
  journal={SIAM Journal on Numerical Analysis},
  volume={7},
  number={1},
  pages={1--46},
  year={1970},
  publisher={SIAM},
  doi={10.1137/0707001}
}

@article{Bonacich:1987,
  title={Power and centrality: A family of measures},
  author={Bonacich, Phillip},
  journal={American journal of sociology},
  volume={92},
  number={5},
  pages={1170--1182},
  year={1987},
  publisher={University of Chicago Press},
  doi={10.1086/228631}
}

@article{Kleinberg:2002,
  title =	 {An impossibility theorem for clustering},
  author =	 {Kleinberg, Jon},
  journal =	 neurips,
  volume =	 15,
  pages =	 {463--470},
  year =	 2002,
  url =      {https://proceedings.neurips.cc/paper/2002/hash/43e4e6a6f341e00671e123714de019a8-Abstract.html},
  urldecor = {https://proceedings.neurips.cc/}
}

@article{Ben:2008,
  title={Measures of clustering quality: A working set of axioms for clustering},
  author={Ben-David, Shai and Ackerman, Margareta},
  journal=neurips,
  volume={21},
  pages={121--128},
  year={2008},
  url={https://proceedings.neurips.cc/paper/2008/hash/beed13602b9b0e6ecb5b568ff5058f07-Abstract.html},
  urldecor={https://proceedings.neurips.cc/}
}

@inproceedings{Cohen:2018,
  title={Clustering Redemption -- Beyond the Impossibility of {K}leinberg's Axioms},
  author={Cohen-Addad, Vincent and Kanade, Varun and Mallmann-Trenn, Frederik},
  booktitle=neurips,
  pages={8526--8535},
  year={2018},
  url={https://proceedings.neurips.cc/paper/2018/hash/6fbd841e2e4b2938351a4f9b68f12e6b-Abstract.html},
  urldecor={https://proceedings.neurips.cc/}
}

@article{Carlsson:2013,
  title =	 {Classifying clustering schemes},
  author =	 {Carlsson, Gunnar and M{\'e}moli, Facundo},
  journal =	 fcm,
  volume =	 {13},
  number =	 {2},
  pages =	 {221--252},
  year =	 {2013},
  publisher =	 {Springer},
  doi =      {10.1007/s10208-012-9141-9}
}

@book{Riehl:2017,
  title =	 {Category theory in context},
  author =	 {Riehl, Emily},
  year =	 {2017},
  publisher =	 {Courier Dover Publications},
  url =		 {https://math.jhu.edu/~eriehl/context/},
}

@book{MacLane:1971,
  title={Categories for the working mathematician},
  author={Mac~Lane, Saunders},
  year={1971},
  publisher={Springer Verlag},
  doi={10.1007/978-1-4757-4721-8}
}

@article{Bronstein:2017,
  title={Geometric deep learning: going beyond {E}uclidean data},
  author={Bronstein, Michael M and Bruna, Joan and LeCun, Yann and Szlam, Arthur and Vandergheynst, Pierre},
  journal=spmag,
  volume={34},
  number={4},
  pages={18--42},
  year={2017},
  publisher={IEEE},
  doi={10.1109/MSP.2017.2693418}
}

@article{Hamilton:2020,
  title =	 {Graph representation learning},
  author =	 {Hamilton, William L.},
  journal =	 {Synthesis Lectures on Artifical Intelligence and
  Machine Learning},
  volume =	 14,
  number =	 3,
  pages =	 {1--159},
  year =	 2020,
  publisher =	 {Morgan \& Claypool Publishers},
  doi =      {10.2200/s01045ed1v01y202009aim046}
}

@article{Segarra:Metric,
  title =	 {Metric representations of network data},
  author =	 {Segarra, Santiago and Carlsson, Gunnar and
  M{\'e}moli, Facundo and Ribeiro, Alejandro},
  url =		 {https://ss187.blogs.rice.edu/files/2018/07/metric_projections-2eqfowt.pdf},
  urldecor = {https://segarra.rice.edu/publications/},
  year =	 {2016}
}

@inproceedings{Segarra:2020,
  title =           {Metric Representations of Networks: 
  A Uniqueness Result},
  author =          {Segarra, Santiago and Roddenberry, 
  T. Mitchell and M{\'e}moli, Facundo and Ribeiro, 
  Alejandro},
  booktitle =       icassp,
  pages =           {5345--5349},
  year =            {2020},
  doi =             {10.1109/icassp40776.2020.9054190}
}

@inproceedings{deepwalk,
title={Deep{W}alk: Online Learning of Social Representations},
author={Bryan Perozzi and Rami Al-Rfou and Steven Skiena},
booktitle=kdd,
pages={701--710},
year={2014},
month=aug,
doi={10.1145/2623330.2623732}
}

@inproceedings{node2vec,
title={node2vec: Scalable Feature Learning for Networks},
author={Aditya Grover and Jure Leskovec},
booktitle=kdd,
pages={855--864},
year={2016},
month=aug,
doi={10.1145/2939672.2939754}
}

@inproceedings{grarep,
title={Gra{R}ep: Learning Graph Representations with Global Structural Information},
author={Shaosheng Cao and Wei Lu and Qiongkai Xu},
booktitle=cikm,
pages={891--900},
year={2015},
month=oct,
doi={10.1145/2806416.2806512}
}

@inproceedings{Qiu:2018,
title={Network embedding as matrix factorization: Unifying {DeepWalk, LINE, PTE,} and node2vec},
author={Qiu, Jiezhong and Dong, Yuxiao and Ma, Hao and Li, Jian and Wang, Kuansan and Tang, Jie},
booktitle=wsdm,
pages={459--467},
year={2018},
doi={10.1145/3159652.3159706}
}

@inproceedings{hope,
title={Asymmetric Transitivity Preserving Graph Embedding},
author={Mingdong Ou and Peng Cui and Jian Pei and Ziwei Zhang and Wenwu Zhu},
booktitle=kdd,
pages={1105--1114},
year={2016},
month=aug,
doi={10.1145/2939672.2939751}
}

@article{zhang2018network,
  title={Network representation learning: {A} survey},
  author={Zhang, Daokun and Yin, Jie and Zhu, Xingquan and Zhang, Chengqi},
  journal={IEEE Transactions on Big Data},
  volume={6},
  number={1},
  pages={3--28},
  year={2018},
  doi={10.1109/tbdata.2018.2850013}
}

@article{cai2018comprehensive,
  title={A comprehensive survey of graph embedding: Problems, techniques, and applications},
  author={Cai, Hongyun and Zheng, Vincent W and Chang, Kevin Chen-Chuan},
  journal={IEEE Transactions on Knowledge and Data Engineering},
  volume={30},
  number={9},
  pages={1616--1637},
  year={2018},
  doi={10.1109/tkde.2018.2807452}
}

@article{goyal2018graph,
  title={Graph embedding techniques, applications, and performance: {A} survey},
  author={Goyal, Palash and Ferrara, Emilio},
  journal={Knowledge-Based Systems},
  volume={151},
  pages={78--94},
  year={2018},
  doi={10.1016/j.knosys.2018.03.022}
}

@article{von2007tutorial,
  title={A tutorial on spectral clustering},
  author={von Luxburg, Ulrike},
  journal={Statistics and computing},
  volume={17},
  number={4},
  pages={395--416},
  year={2007},
  doi={10.1007/s11222-007-9033-z}
}

@article{roweis2000nonlinear,
  title={Nonlinear dimensionality reduction by locally linear embedding},
  author={Roweis, Sam T and Saul, Lawrence K},
  journal={Science},
  volume={290},
  number={5500},
  pages={2323--2326},
  year={2000},
  doi={10.1126/science.290.5500.2323}
}

@article{belkin2003laplacian,
  title={Laplacian eigenmaps for dimensionality reduction and data representation},
  author={Belkin, Mikhail and Niyogi, Partha},
  journal={Neural computation},
  volume={15},
  number={6},
  pages={1373--1396},
  year={2003},
  doi={10.1162/089976603321780317}
}

@incollection{cox2008multidimensional,
  title={Multidimensional scaling},
  author={Cox, Michael A.A. and Cox, Trevor F},
  booktitle={Handbook of data visualization},
  publisher={Springer},
  pages={315--347},
  year={2008},
  doi={10.1007/978-3-540-33037-0_14}
}

@article{tenenbaum2000global,
  title={A global geometric framework for nonlinear dimensionality reduction},
  author={Tenenbaum, Joshua B and De Silva, Vin and Langford, John C},
  journal={Science},
  volume={290},
  number={5500},
  pages={2319--2323},
  year={2000},
  doi={10.1126/science.290.5500.2319}
}

@inproceedings{word2vec_1,
title={Efficient Estimation of Word Representations in Vector Space},
author={Tomas Mikolov and Kai Chen and Greg Corrado and Jeffrey Dean},
booktitle=iclr # { Workshop},
year={2013},
url={https://arxiv.org/abs/1301.3781},
}

@inproceedings{word2vec_2,
title={Distributed Representations of Words and Phrases and their Compositionality},
author={Tomas Mikolov and Ilya Sutskever and Kai Chen and Greg Corrado and Jeffrey Dean},
booktitle=neurips,
pages={3111--3119},
year={2013},
month=dec,
url={https://proceedings.neurips.cc/paper/2013/hash/9aa42b31882ec039965f3c4923ce901b-Abstract.html},
urldecor={https://proceedings.neurips.cc/},
}

@inproceedings{word_mf,
title={Neural Word Embedding as Implicit Matrix Factorization},
author={Omer Levy and Yoav Goldberg},
booktitle=neurips,
pages={2177--2185},
year={2014},
month=dec,
url={https://proceedings.neurips.cc/paper/2014/hash/feab05aa91085b7a8012516bc3533958-Abstract.html},
urldecor={https://proceedings.neurips.cc/}
}

@inproceedings{cao2016deep,
  title={Deep neural networks for learning graph representations},
  author={Cao, Shaosheng and Lu, Wei and Xu, Qiongkai},
  booktitle=aaai,
  year={2016},
  month=feb,
  url={https://ojs.aaai.org/index.php/AAAI/article/view/10179},
  urldecor={https://ojs.aaai.org/}
}

@inproceedings{wang2016structural,
  title={Structural deep network embedding},
  author={Wang, Daixin and Cui, Peng and Zhu, Wenwu},
  booktitle=kdd,
  pages={1225--1234},
  year={2016},
  doi={10.1145/2939672.2939753}
}

@inproceedings{pan2018adversarially,
  title={Adversarially regularized graph autoencoder for graph embedding},
  author={Pan, Shirui and Hu, Ruiqi and Long, Guodong and Jiang, Jing and Yao, Lina and Zhang, Chengqi},
  booktitle=ijcai,
  pages={2609--2615},
  year={2018},
  doi={10.24963/ijcai.2018/362}
}

@article{shervashidze2011weisfeiler,
  title={Weisfeiler-{L}ehman graph kernels},
  author={Shervashidze, Nino and Schweitzer, Pascal and Van Leeuwen, Erik Jan and Mehlhorn, Kurt and Borgwardt, Karsten M},
  journal=jmlr,
  volume={12},
  number={9},
  year={2011},
  url={https://www.jmlr.org/papers/v12/shervashidze11a.html},
  urldecor={https://www.jmlr.org/}
}

@inproceedings{hamilton2017inductive,
  title={Inductive representation learning on large graphs},
  author={Hamilton, William L and Ying, Rex and Leskovec, Jure},
  booktitle=neurips,
  pages={1025--1035},
  year={2017},
  url={https://proceedings.neurips.cc/paper/2017/hash/5dd9db5e033da9c6fb5ba83c7a7ebea9-Abstract.html},
  urldecor={https://proccedings.neurips.cc/}
}

@article{wang2017knowledge,
  title={Knowledge graph embedding: {A} survey of approaches and applications},
  author={Wang, Quan and Mao, Zhendong and Wang, Bin and Guo, Li},
  journal={IEEE Transactions on Knowledge and Data Engineering},
  volume={29},
  number={12},
  pages={2724--2743},
  year={2017},
  doi={10.1109/tkde.2017.2754499}
}

@article{seshadhri2020impossibility,
  title={The impossibility of low-rank representations for triangle-rich complex networks},
  author={Seshadhri, C and Sharma, Aneesh and Stolman, Andrew and Goel, Ashish},
  journal=pnas,
  volume={117},
  number={11},
  pages={5631--5637},
  year={2020},
  doi={10.1073/pnas.1911030117}
}

@article{chanpuriya2020node,
  title={Node Embeddings and Exact Low-Rank Representations of Complex Networks},
  author={Chanpuriya, Sudhanshu and Musco, Cameron and Sotiropoulos, Konstantinos and Tsourakakis, Charalampos},
  journal=neurips,
  volume={33},
  year={2020},
  url={https://proceedings.neurips.cc/paper/2020/hash/99503bdd3c5a4c4671ada72d6fd81433-Abstract.html},
  urldecor={https://proceedings.neurips.cc/}
}

@inproceedings{pennock2000social,
  title={Social Choice Theory and Recommender Systems: {A}nalysis of the Axiomatic Foundations of Collaborative Filtering},
  author={Pennock, David M and Horvitz, Eric and Giles, C Lee},
  booktitle=aaai,
  pages={729--734},
  year={2000},
  url={https://aaai.org/Library/AAAI/2000/aaai00-112.php},
  urldecor={https://aaai.org/}
}

@inproceedings{andersen2008trust,
  title={Trust-based recommendation systems: {A}n axiomatic approach},
  author={Andersen, Reid and Borgs, Christian and Chayes, Jennifer and Feige, Uriel and Flaxman, Abraham and Kalai, Adam and Mirrokni, Vahab and Tennenholtz, Moshe},
  booktitle=www,
  pages={199--208},
  year={2008},
  doi={10.1145/1367497.1367525}
}

@inproceedings{kenney2005axiomatic,
  title={An axiomatic approach to corner detection},
  author={Kenney, Charles S and Zuliani, Marco and Manjunath, BS},
  booktitle=cvpr,
  volume={1},
  pages={191--197},
  year={2005},
  organization={IEEE},
  doi={10.1109/CVPR.2005.68}
}

@inproceedings{chessel2006interpolating,
  title={Interpolating orientation fields: {A}n axiomatic approach},
  author={Chessel, Anatole and Cao, Frederic and Fablet, Ronan},
  booktitle={European Conference on Computer Vision},
  pages={241--254},
  year={2006},
  organization={Springer},
  doi={10.1007/11744085_19}
}

@inproceedings{lan2010axiomatic,
  title={An axiomatic theory of fairness in network resource allocation},
  author={Lan, Tian and Kao, David and Chiang, Mung and Sabharwal, Ashutosh},
  booktitle=infocom,
  pages={1343--1351},
  year={2010},
  doi={10.1109/INFCOM.2010.5461911}
}

@inproceedings{Srinivasan:2019,
  title={On the equivalence between positional node embeddings and structural graph representations},
  author={Srinivasan, Balasubramaniam and Ribeiro, Bruno},
  booktitle=iclr,
  year={2019},
  url={https://openreview.net/forum?id=SJxzFySKwH}
}

\end{document}